\documentclass[pdflatex,sn-basic]{sn-jnl}

\usepackage{graphicx}
\usepackage{multirow}
\usepackage{amsmath,amssymb,amsfonts}
\usepackage{amsthm}
\usepackage{mathrsfs}
\usepackage[title]{appendix}
\usepackage{xcolor}
\usepackage{textcomp}
\usepackage{booktabs}
\usepackage{algorithm}
\usepackage{algpseudocode}
\usepackage{tikz}
\usepackage{comment}
\usetikzlibrary{positioning,calc,fit,backgrounds,shadows,arrows.meta,shapes.geometric}

\theoremstyle{thmstyleone}
\newtheorem{proposition}{Proposition}
\theoremstyle{thmstylethree}
\newtheorem{remark}{Remark}

\newcommand{\vect}[1]{\mathbf{#1}}
\newcommand{\mat}[1]{\mathbf{#1}}
\newcommand{\R}{\mathbb{R}}
\newcommand{\one}{\mathbf{1}}
\newcommand{\nnz}{\operatorname{nnz}}
\newcommand{\diag}{\operatorname{diag}}
\newcommand{\tr}{\operatorname{tr}}
\newcommand{\qf}{\operatorname{qf}}
\newcommand{\cosr}{\operatorname{cos}}
\newcommand{\FM}{\operatorname{FM}}
\newcommand{\MM}{\operatorname{MM}}
\newcommand{\Drop}{\operatorname{Drop}}

\begin{document}

\title[HyperFuse: fast self-supervised hypergraph node embeddings]{HyperFuse: Fast Self-Supervised Node Embeddings for Attributed Hypergraphs}

\author[1]{\fnm{Megha} \sur{P}}
\author[1]{\fnm{Harshit} \sur{Kumar}}
\author[1]{\fnm{Srajan} \sur{Agarwal}}
\author[2]{\fnm{Anirban} \sur{Banerjee}}
\author[3,4]{\fnm{Olaf} \sur{Wolkenhauer}}
\author*[1]{\fnm{Saptarshi} \sur{Bej}}\email{sbej7042@iisertvm.ac.in}

\affil[1]{\orgdiv{School of Data Science}, \orgname{Indian Institute of Science Education and Research Thiruvananthapuram}, \orgaddress{\city{Thiruvananthapuram}, \postcode{695551}, \state{Kerala}, \country{India}}}
\affil[2]{\orgdiv{Department of Mathematics and Statistics}, \orgname{Indian Institute of Science Education and Research Kolkata}, \orgaddress{\city{Mohanpur}, \postcode{741246}, \state{West Bengal}, \country{India}}}
\affil[3]{\orgdiv{Institute of Computer Science}, \orgname{University of Rostock}, \orgaddress{\city{Rostock}, \postcode{18051}, \country{Germany}}}
\affil[4]{\orgname{Stellenbosch Institute for Advanced Study (STIAS)}, \orgaddress{\city{Stellenbosch}, \country{South Africa}}}

\abstract{Self-supervised hypergraph representation learning produces informative node embeddings, but current methods train deep encoders for hundreds of epochs and can take hours on a hypergraph of a few thousand nodes. This cost becomes limiting when embeddings must be generated for many hypergraphs, for example one interaction network per patient, or repeatedly for evolving hypergraphs. We present HyperFuse, a label-free pipeline for fast embedding generation. HyperFuse (i) computes structural node coordinates by maximising a spectral relaxation of hypergraph modularity on Banerjee's hypergraph adjacency, using a matrix-free operator whose cost is linear in the number of node--hyperedge incidences; (ii) builds multi-scale feature summaries and assigns each hyperedge a bounded utility weight based on the stability of its members under feature and membership masking; and (iii) trains a small utility-weighted hypergraph encoder for 100 epochs with an invariance-plus-decorrelation objective. We compare HyperFuse with TriCL, SE-HSSL, VilLain and HypeBoy on nine public hypergraphs, using six downstream classifiers and $k$-means clustering. On the eight datasets on which all methods completed, HyperFuse needed 8.7\,s per dataset on average, a geometric-mean speed-up of 13--179$\times$ over the baselines. Its accuracy averaged over the datasets was the highest with five of the six classifiers, and its classification and clustering scores did not differ significantly from those of TriCL and SE-HSSL. Compared with HypeBoy, the fastest baseline, HyperFuse was 13$\times$ faster and 2.1--4.1 percentage points more accurate with every classifier. With one configuration for all datasets and seconds per embedding, HyperFuse suits settings that require many embeddings.}

\keywords{Hypergraph representation learning, Self-supervised learning, Node embedding, Modularity, Canonical correlation analysis, Computational efficiency}

\maketitle

\section{Introduction}\label{sec:intro}

Many relational systems involve interactions among more than two entities: a paper has several authors, a protein complex contains several proteins, and a set of documents shares a keyword. Hypergraphs represent such higher-order interactions directly, with hyperedges that connect arbitrary subsets of nodes \citep{battiston2020networks,klamt2009hypergraphs}. A common way to make hypergraph data usable by standard learning algorithms is to map each node to a low-dimensional vector, an \emph{embedding}, that summarises its attributes and its position in the hypergraph \citep{antelmi2023survey}. Because labels are often scarce, self-supervised methods, which learn embeddings without labels and are then evaluated on downstream tasks, have become the dominant approach \citep{kim2024survey}.

Current self-supervised hypergraph methods such as TriCL \citep{lee2023tricl}, SE-HSSL \citep{li2024sehssl}, VilLain \citep{lee2024villain} and HypeBoy \citep{kim2024hypeboy} produce embeddings that support accurate node classification and clustering. They do so by training neural encoders with contrastive, correlation-based or generative objectives for hundreds to thousands of epochs, frequently with objectives that compare every node with every other node. In our benchmark (Sect.~\ref{sec:results}), the official configurations of these methods required between 4\,s and 9.1\,h per hypergraph, and generating one embedding of PubMed (3{,}840 nodes) took 9.1\,h with TriCL, 3.5\,h with SE-HSSL and 1.4\,h with VilLain on our hardware; HypeBoy, the fastest baseline, needed 1.0\,min on PubMed but 2.3\,h on 20News.

Such costs are acceptable when a single, fixed hypergraph is embedded once. They are less acceptable in two settings that motivate this work. First, in \emph{collections of hypergraphs}, every member needs its own embedding and the cost is multiplied by the size of the collection; sample-specific molecular networks, in which each patient is represented by an individual interaction network, are one example \citep{kuijjer2019lioness}. Second, in \emph{evolving hypergraphs}, node attributes and memberships change over time and embeddings must be recomputed for new snapshots \citep{kazemi2020dynamic}. In both settings, the time needed to produce one embedding determines whether an analysis is feasible.

We introduce \emph{HyperFuse}, a self-supervised embedding method for attributed hypergraphs that prioritises embedding-generation time while aiming to retain the quality of current self-supervised methods. HyperFuse builds on FUSE \citep{chakraborty2026fuse}, which showed that a scalable approximation of modularity maximisation provides structural node embeddings at low cost for ordinary graphs in a semi-supervised setting. HyperFuse is fully unsupervised and extends this idea to hypergraphs with node attributes. It proceeds in three stages. Stage~A computes structural coordinates by maximising a spectral relaxation of modularity defined on the hypergraph adjacency matrix of \citet{banerjee2021spectrum}; the modularity operator is applied without forming any $N\times N$ matrix. Stage~B computes feature summaries that are consistent under two corruptions of the hypergraph and estimates a bounded utility weight for every hyperedge. Stage~C trains a small hypergraph encoder whose aggregation is weighted by these utilities, using a correlation-based self-supervised objective \citep{zhang2021ccassg} for a fixed budget of 100 epochs.

Our contributions are as follows.
\begin{itemize}
\item We describe HyperFuse, a three-stage, label-free pipeline for node embedding generation in attributed hypergraphs, in which inexpensive linear-algebraic stages (A and B) supply structural, spectral and hyperedge-reliability information to a short self-supervised training stage (C).
\item We build hypergraph modularity on Banerjee's adjacency matrix, whose strengths equal the node degrees \citep{banerjee2021spectrum}, obtain a modularity operator that is applied in time linear in the number of incidences without forming any $N\times N$ matrix, and state conditions under which the Stage~A iteration converges to the leading eigenspace of the modularity matrix (Propositions~\ref{prop:strength}--\ref{prop:ascent}).
\item We report a benchmark on nine public hypergraphs against four self-supervised baselines run with their official configurations, evaluating embeddings with six downstream classifiers, two clustering scores and wall-clock embedding time, together with rank-based statistical tests.
\end{itemize}

On the eight datasets on which all methods completed, HyperFuse was the fastest method on every dataset, with geometric-mean speed-ups of 143$\times$ over TriCL, 96$\times$ over SE-HSSL, 179$\times$ over VilLain and 13$\times$ over HypeBoy, its accuracy averaged over the datasets was the highest with five of the six downstream classifiers, and its classification and clustering scores did not differ significantly from those of TriCL and SE-HSSL and were higher than those of HypeBoy with every classifier. We discuss these results and the limitations of the study in Sect.~\ref{sec:discussion}.

\section{Related work}\label{sec:related}

\paragraph{Hypergraph neural networks.}
Spectral hypergraph learning dates back at least to the normalised hypergraph Laplacian of \citet{zhou2006learning}. Hypergraph neural networks generalise graph convolution to hypergraphs through node-to-hyperedge and hyperedge-to-node aggregation: HGNN \citep{feng2019hgnn} uses the normalised incidence operator, HyperGCN \citep{yadati2019hypergcn} approximates hyperedges by weighted pairwise edges, HNHN \citep{dong2020hnhn} introduces hyperedge neurons with separate normalisations, UniGNN \citep{huang2021unignn} transfers graph architectures to hypergraphs, and AllSet \citep{chien2022allset} formulates both aggregation steps as learnable multiset functions. We use these models as downstream classifiers rather than as competitors. Surveys are given by \citet{antelmi2023survey} and \citet{kim2024survey}.

\paragraph{Self-supervised hypergraph learning.}
HyperGCL \citep{wei2022hypergcl} studies fabricated and generative augmentations for hypergraph contrastive learning. TriCL \citep{lee2023tricl} contrasts two augmented views at the node, hyperedge and node--hyperedge membership levels. SE-HSSL \citep{li2024sehssl} replaces the node- and hyperedge-level contrast by sampling-free objectives based on canonical correlation analysis and adds a hierarchical membership-level contrast. HypeBoy \citep{kim2024hypeboy} is a generative approach based on hyperedge filling. VilLain \citep{lee2024villain} learns embeddings from the structure alone by propagating learned virtual labels. These methods are closest to our setting; we compare with TriCL, SE-HSSL, VilLain and HypeBoy, which cover contrastive, correlation-based, structure-only and generative learning. HyperGCL is not included because running every baseline with its official configuration on all datasets and seeds already required about 111 GPU-hours. On graphs, related objectives include mutual-information maximisation \citep{velickovic2019dgi}, node-level contrast \citep{zhu2020grace}, bootstrapping \citep{thakoor2022bgrl}, and the invariance-plus-decorrelation objective of CCA-SSG \citep{zhang2021ccassg}, which is related to Barlow Twins \citep{zbontar2021barlow}. Stage~C of HyperFuse uses the CCA-SSG objective.

\paragraph{Efficient embeddings and modularity.}
Random-walk methods such as DeepWalk \citep{perozzi2014deepwalk} and node2vec \citep{grover2016node2vec} are unsupervised but require sampling and training; NetMF \citep{qiu2018netmf} and ProNE \citep{zhang2019prone} obtain embeddings by matrix factorisation and spectral propagation at lower cost. Decoupling fixed propagation from learning is also the basis of SGC \citep{wu2019sgc} and SSGC \citep{zhu2021ssgc}, whose multi-hop averaging we adopt. Modularity \citep{newman2006modularity} measures the excess of within-group connections over a degree-preserving null model, and hypergraph variants have been proposed for clustering \citep{kumar2020hypergraph,chodrow2021generative}. FUSE \citep{chakraborty2026fuse} maximises a relaxed graph modularity by projected gradient ascent with QR re-orthonormalisation and combines it with label-aware terms. HyperFuse retains the modularity ascent but with the exact modularity matrix instead of FUSE's surrogate gradient, replaces the graph adjacency by Banerjee's hypergraph adjacency \citep{banerjee2021spectrum}, removes all label information and adds feature and self-supervised stages.

\section{Problem setting and notation}\label{sec:prelim}

Scalars are set in italic ($N$, $d$), vectors in bold lower case ($\vect{u}$), matrices in bold upper case ($\mat{H}$) and sets in calligraphic type ($\mathcal{V}$). The $i$-th row of a matrix $\mat{Y}$ is $\vect{y}_i$, $\one_n$ and $\mat{I}_n$ are the all-ones vector and the identity of size $n$, $\diag(\vect{v})$ is the diagonal matrix with diagonal $\vect{v}$, $\|\cdot\|_F$ is the Frobenius norm, $[\mat{Y}\,\|\,\mat{Y}']$ is column-wise concatenation, and $\qf(\mat{Y})$ is the orthonormal factor of the thin QR decomposition of $\mat{Y}$. For two matrices with the same number of rows, $\cosr_i(\mat{Y},\mat{Y}')=\vect{y}_i^\top\vect{y}'_i/(\|\vect{y}_i\|\,\|\vect{y}'_i\|)$ is the cosine similarity of their $i$-th rows. Table~\ref{tab:notation} lists the main symbols.

An attributed hypergraph is a triple $(\mathcal{V},\mathcal{E},\mat{X})$ with node set $\mathcal{V}=\{v_1,\dots,v_N\}$, hyperedge set $\mathcal{E}=\{e_1,\dots,e_M\}$, $e_j\subseteq\mathcal{V}$, and node features $\mat{X}\in\R^{N\times D}$. HyperFuse discards hyperedges with fewer than two members; $M$ denotes the number of remaining hyperedges. The incidence matrix $\mat{H}\in\{0,1\}^{N\times M}$ has $H_{ij}=1$ if $v_i\in e_j$. The degree of $v_i$ is $\deg(v_i)=\sum_j H_{ij}$, the size of $e_j$ is $|e_j|=\sum_i H_{ij}$, and $\mat{D}_v=\diag(\mat{H}\one_M)$ and $\mat{D}_e=\diag(\mat{H}^\top\one_N)$. The number of incidences is $\nnz(\mat{H})=\sum_j|e_j|$. The task is to compute, without labels, an embedding matrix $\mat{Z}\in\R^{N\times d}$ that is useful for downstream tasks, here node classification and clustering, and to do so quickly.

\begin{table}[t]
\caption{Main notation}\label{tab:notation}
\begin{tabular}{@{}ll@{}}
\toprule
Symbol & Meaning \\
\midrule
$N$, $M$, $D$, $d$ & numbers of nodes, hyperedges, input features; embedding width \\
$d_s$, $d_f$, $d_g$ & widths of the structural, feature and incidence summaries ($d/4$, $d/2$, $d/4$) \\
$\mat{H}$, $\mat{D}_v$, $\mat{D}_e$ & incidence matrix, node-degree and hyperedge-size matrices \\
$\mat{A}$, $\vect{k}$, $\mat{B}$ & Banerjee adjacency, strength vector, modularity matrix \\
$\mat{S}$, $\mat{F}$, $\mat{G}$ & structural, feature and incidence summaries (Stages A, B) \\
$\mat{R}$, $\mathcal{T}_{\mat{R}}$, $\boldsymbol{\Phi}_\ell$ & normalised incidence, lazy propagation, propagation at scale $\ell$ \\
$\vect{u}\in\R^M$ & hyperedge utility weights \\
$\mathcal{A}^{\boldsymbol\mu}$ & utility-weighted node--hyperedge--node aggregation under keep-mask $\boldsymbol\mu$ \\
$f_{\boldsymbol\theta}$, $\mat{Z}',\mat{Z}''$, $\mat{Z}$ & encoder, embeddings of two views, final embedding \\
\botrule
\end{tabular}
\end{table}

\section{HyperFuse}\label{sec:method}

Figure~\ref{fig:arch} summarises the pipeline and Algorithm~\ref{alg:hyperfuse} gives it in pseudocode. Stages~A and B involve only linear algebra, truncated decompositions and a small ridge regression; Stage~C is the only stage with a neural network trained by gradient descent. No stage uses labels.

\begin{figure}[t]
\centering
\resizebox{\textwidth}{!}{%
\begin{tikzpicture}[
    block/.style={rectangle, draw=black!70, fill=blue!5, thick, rounded corners=3pt, minimum width=4.4cm, minimum height=1.6cm, align=center, font=\sffamily\small, drop shadow={opacity=0.15}},
    stagebox/.style={rectangle, draw=gray!60, fill=gray!5, dashed, rounded corners=6pt, inner sep=8pt},
    arrow/.style={-{Stealth[length=2.4mm]}, thick, color=black!70},
    data/.style={ellipse, draw=orange!70!black, fill=orange!10, thick, minimum width=2.4cm, minimum height=1.0cm, align=center, font=\sffamily\footnotesize},
    lab/.style={font=\sffamily\footnotesize, fill=white, inner sep=1pt}
]
    \node[data] (H) at (0, 1.9) {Incidence\\$\mat{H}\in\{0,1\}^{N\times M}$};
    \node[data] (X) at (0,-1.9) {Node features\\$\mat{X}\in\R^{N\times D}$};
    \node[block] (A) at (5.2, 1.9) {Banerjee adjacency $\mat{A}$,\\modularity matrix $\mat{B}$ (matrix-free)\\projected ascent $\Rightarrow \mat{S}\in\R^{N\times d_s}$};
    \node[block] (B) at (5.2,-1.9) {residualisation w.r.t.\ $\mat{S}$,\\two corrupted views, three scales\\truncated SVD $\Rightarrow \mat{F},\mat{G}$\\stability regression $\Rightarrow \vect{u}\in\R^{M}$};
    \node[block] (C) at (12.6, 0) {two corrupted views per epoch\\utility-weighted hypergraph encoder\\invariance + decorrelation loss};
    \node[data] (Z) at (17.0, 0) {Embeddings\\$\mat{Z}\in\R^{N\times d}$};
    \draw[arrow] (H) -- (A);
    \draw[arrow] (X) -- (B);
    \draw[arrow] (H.south east) -- node[lab, pos=0.35] {$\mat{H}$} (B.north west);
    \draw[arrow] (A) -- node[lab, right] {$\mat{S}$} (B);
    \draw[arrow] (B.east) -- ++(1.0,0) |- node[lab, pos=0.78, above=2pt] {$[\mat{F}\,\|\,\mat{G}]$, $\vect{u}$} ($(C.west)+(0,-0.3)$);
    \draw[arrow] (H.north) -- ++(0,1.7) -| node[lab, pos=0.25] {$\mat{H}$} ($(C.north)+(1.5,0)$);
    \draw[arrow] (X.south) -- ++(0,-1.7) -| node[lab, pos=0.25] {$\mat{X}$ (row-normalised)} ($(C.south)+(1.5,0)$);
    \draw[arrow] (C) -- (Z);
    \begin{scope}[on background layer]
        \node[stagebox, fit=(A), label={[font=\sffamily\bfseries\color{blue!40!black}]north:Stage A: structure}] {};
        \node[stagebox, fit=(B), label={[font=\sffamily\bfseries\color{blue!40!black}]south:Stage B: feature summaries and hyperedge utility}] {};
        \node[stagebox, fit=(C), label={[font=\sffamily\bfseries\color{blue!40!black}, xshift=-1.5cm]north:Stage C: self-supervised encoder}] {};
    \end{scope}
\end{tikzpicture}}
\caption{Overview of HyperFuse. Stage~A computes structural coordinates $\mat{S}$ by maximising a spectral relaxation of hypergraph modularity. Stage~B computes feature summaries $\mat{F}$ and incidence summaries $\mat{G}$, and hyperedge utility weights $\vect{u}$ derived from the stability of hyperedge members under feature and membership masking. Stage~C trains a two-layer hypergraph encoder, whose hyperedge-to-node aggregation is weighted by $\vect{u}$, on two corrupted views with an invariance-plus-decorrelation objective. No stage uses labels.}\label{fig:arch}
\end{figure}

\subsection{Stage A: modularity on Banerjee's hypergraph adjacency}\label{sec:stageA}

Let $\mat{W}=\diag\big(1/(|e_1|-1),\dots,1/(|e_M|-1)\big)$. Following \citet{banerjee2021spectrum}, the adjacency matrix of the hypergraph is
\begin{equation}\label{eq:adj}
\mat{A}=\mat{H}\mat{W}\mat{H}^\top-\diag(\mat{H}\mat{W}\one_M),\qquad
A_{il}=\sum_{j:\,v_i,v_l\in e_j}\frac{1}{|e_j|-1}\quad(i\neq l),\qquad A_{ii}=0 .
\end{equation}
Each hyperedge thus distributes a total weight of one from each of its members to the other members. With strengths $\vect{k}=\mat{A}\one_N$ and $2m=\one_N^\top\vect{k}$, the modularity matrix \citep{newman2006modularity} is
\begin{equation}\label{eq:mod}
\mat{B}=\mat{A}-\frac{1}{2m}\vect{k}\vect{k}^\top .
\end{equation}

\begin{proposition}[Strengths]\label{prop:strength}
For the adjacency~\eqref{eq:adj}, $\vect{k}=\mat{H}\one_M$, that is, $k_i=\deg(v_i)$, and $2m=\nnz(\mat{H})$.
\end{proposition}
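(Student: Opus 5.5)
The plan is a direct computation of $\vect{k}=\mat{A}\one_N$ from the matrix form in \eqref{eq:adj}, with no earlier result needed. We write
\[
\mat{A}\one_N=\mat{H}\mat{W}\mat{H}^\top\one_N-\diag(\mat{H}\mat{W}\one_M)\one_N .
\]
The first term uses $\mat{H}^\top\one_N=(|e_1|,\dots,|e_M|)^\top$. Hence $\mat{W}\mat{H}^\top\one_N$ has $j$-th entry $|e_j|/(|e_j|-1)$, so the first term is $\mat{H}\vect{c}$ with $c_j=|e_j|/(|e_j|-1)$. The second term is simply the vector $\mat{H}\mat{W}\one_M$, since a diagonal matrix applied to $\one_N$ returns its diagonal. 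Its entries are $\sum_j H_{ij}/(|e_j|-1)$.

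Subtracting gives, entrywise,
\[
k_i=\sum_j H_{ij}\,\frac{|e_j|-1}{|e_j|-1}=\sum_j H_{ij}=\deg(v_i),
\]
that is, $\vect{k}=\mat{H}\one_M$. Equivalently, in the entrywise form of \eqref{eq:adj}, each hyperedge $e_j\ni v_i$ contributes $1/(|e_j|-1)$ to each of the other $|e_j|-1$ members, hence exactly $1$ to row $i$. This is the ``total weight of one'' remark in the text, and it gives a one-line check.

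Then
\[
2m=\one_N^\top\mat{H}\one_M=\sum_{i,j}H_{ij}=\sum_j|e_j|=\nnz(\mat{H}).
\]

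The only point needing care is that $\mat{W}$ is well defined, i.e.\ $|e_j|\ge 2$ for all $j$. This holds because HyperFuse discards hyperedges with fewer than two members, as stated in Sect.~\ref{sec:prelim}. Otherwise there is no real obstacle; the statement follows by this bookkeeping.
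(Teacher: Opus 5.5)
Your proof is correct and is essentially the paper's argument. The paper sums the off-diagonal entries of row $i$ directly, with each incident hyperedge contributing $(|e_j|-1)\cdot(|e_j|-1)^{-1}=1$; this is exactly your ``one-line check'', and your matrix-form computation is the same calculation in vector notation, where the diagonal correction cancels the self-term and the sum giving $2m=\nnz(\mat{H})$ is identical.
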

\begin{proof}
For each $i$, $k_i=\sum_{l\neq i}A_{il}=\sum_{j:\,v_i\in e_j}\sum_{l\in e_j\setminus\{i\}}(|e_j|-1)^{-1}=\sum_{j:\,v_i\in e_j}1=\deg(v_i)$. Summing over $i$ gives $2m=\sum_i\deg(v_i)=\sum_j|e_j|=\nnz(\mat{H})$.
\end{proof}

The identity $\vect{k}=\mat{H}\one_M$ is noted by \citet[Sect.~2]{banerjee2021spectrum}; we state it because the modularity null model relies on it. Proposition~\ref{prop:strength} implies that the null model in~\eqref{eq:mod} depends only on the node strengths, which in this construction equal the number of hyperedges incident to each node. It helps us to avoid the explicit calculation of $\mat{A}$ for obtaining node-strength vector and total edge weight. Materialising $\mat{A}$ can be computationally expensive: a hyperedge with
$|e|$ members contributes up to $|e|(|e|-1)$ non-zeros to $\mat{A}$
but only $|e|$ incidences to $\mat{H}$. On 20News, $\mat{A}$ would have
$53.3\times10^6$ non-zeros compared with $6.5\times10^4$ incidences
(Table~\ref{tab:datasets}).

Stage~A computes $\mat{S}\in\R^{N\times d_s}$ with orthonormal columns
that maximises $\tr(\mat{S}^\top\mat{B}\mat{S})$, the standard spectral
relaxation of modularity maximisation into $d_s$ soft groups. We use
the projected gradient ascent with QR re-orthonormalisation of FUSE
\citep{chakraborty2026fuse}, applied to the exact modularity matrix
$\mat{B}$ (FUSE uses a surrogate gradient that replaces
$\vect{k}^\top\mat{S}$ by $\one_N^\top\mat{S}$),
\begin{equation}\label{eq:ascent}
\mat{S}^{(t+1)}
=
\qf\big(\mat{S}^{(t)}+\eta\,\mat{B}\mat{S}^{(t)}\big),
\qquad
\mat{S}^{(0)}=\qf(\boldsymbol\Omega),
\quad
\Omega_{ik}\overset{\text{i.i.d.}}{\sim}\mathcal{N}(0,1),
\end{equation}
with step size
$\eta=2c/(\lambda_1+|\min(\lambda_N,0)|)$, $c=0.9$, where
$\lambda_1\ge\dots\ge\lambda_N$ are the eigenvalues of $\mat{B}$. Crucially, Proposition~\ref{prop:matfree} allows the product
$\mat{B}\mat{S}^{(t)}$ required at each iteration to be evaluated
directly from the sparse incidence matrix $\mat{H}$, without explicitly
materialising either $\mat{A}$ or $\mat{B}$.

\begin{proposition}[Calculating $\mat{B}\mat{S}$ without calculating $\mat{A}$ or $\mat{B}$]
\label{prop:matfree}
For the Stage-A iterate $\mat{S}\in\R^{N\times d_s}$,
\begin{equation}
\label{eq:matfree}
\mat{B}\mat{S}
=
\mat{H}\bigl(\mat{W}(\mat{H}^\top\mat{S})\bigr)
-
\diag(\mat{H}\mat{W}\one_M)\,\mat{S}
-
\frac{1}{\nnz(\mat{H})}\,
\vect{k}\,(\vect{k}^\top\mat{S}).
\end{equation}
Thus, $\mat{B}\mat{S}$ can be computed directly from the sparse
incidence matrix $\mat{H}$ without forming either $\mat{A}$ or
$\mat{B}$. The multiplication requires
\[
O\bigl((\nnz(\mat{H})+N)d_s\bigr)
\]
operations and
\[
O\bigl(\nnz(\mat{H})+(N+M)d_s\bigr)
\]
memory.
\end{proposition}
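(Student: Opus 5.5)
The identity~\eqref{eq:matfree} follows by substituting the definitions~\eqref{eq:adj} and~\eqref{eq:mod} into $\mat{B}\mat{S}$ and using Proposition~\ref{prop:strength}. I will expand
\[
\mat{B}\mat{S}=\mat{A}\mat{S}-\tfrac{1}{2m}\vect{k}\vect{k}^\top\mat{S}
=\mat{H}\mat{W}\mat{H}^\top\mat{S}-\diag(\mat{H}\mat{W}\one_M)\mat{S}-\tfrac{1}{2m}\vect{k}(\vect{k}^\top\mat{S}).
\]
Proposition~\ref{prop:strength} gives $2m=\nnz(\mat{H})$ and $\vect{k}=\mat{H}\one_M$, so the null-model term is as stated. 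Associativity of matrix multiplication lets me bracket the first term as $\mat{H}(\mat{W}(\mat{H}^\top\mat{S}))$. The key point is that no factor in~\eqref{eq:matfree} is an $N\times N$ matrix, except the diagonal one, which is stored as a vector.

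For the operation count I will cost each product in the bracketed order.
\begin{itemize}
\item $\mat{H}^\top\mat{S}$ is a sparse--dense product costing $O(\nnz(\mat{H})d_s)$ and producing an $M\times d_s$ matrix.
\item Left multiplication by the diagonal $\mat{W}$ scales the rows at cost $O(Md_s)$. Since every retained hyperedge has at least two members, $M\le\nnz(\mat{H})/2$, so this cost is absorbed.
\item Multiplying by $\mat{H}$ again costs $O(\nnz(\mat{H})d_s)$.
\item The diagonal vector $\mat{H}\mat{W}\one_M$ costs $O(\nnz(\mat{H}))$ to form (once), and applying it costs $O(Nd_s)$.
\item $\vect{k}=\mat{H}\one_M$ costs $O(\nnz(\mat{H}))$.
\item $\vect{k}^\top\mat{S}$ costs $O(Nd_s)$, and the outer product $\vect{k}(\vect{k}^\top\mat{S})$ costs $O(Nd_s)$.
\item Adding the three $N\times d_s$ results costs $O(Nd_s)$.
\end{itemize}
Summing gives $O((\nnz(\mat{H})+N)d_s)$.

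For memory, I store:
\begin{itemize}
\item $\mat{H}$ in a sparse format (CSR/CSC), using $O(\nnz(\mat{H})+N+M)$;
\item the diagonals of $\mat{W}$ and $\diag(\mat{H}\mat{W}\one_M)$ and the vector $\vect{k}$, using $O(N+M)$;
\item the intermediates $\mat{H}^\top\mat{S}$ and $\mat{W}\mat{H}^\top\mat{S}$, using $O(Md_s)$;
\item $\mat{S}$, the output, and the rank-one term, using $O(Nd_s)$;
\item the row vector $\vect{k}^\top\mat{S}$, using $O(d_s)$.
\end{itemize}
The total is $O(\nnz(\mat{H})+(N+M)d_s)$.

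The only step needing care is ensuring that the bracketing avoids $\mat{H}\mat{W}\mat{H}^\top$ and $\vect{k}\vect{k}^\top$. Forming either would cost $\Theta(\sum_j|e_j|^2)$ or $\Theta(N^2)$. Rather than being a mathematical obstacle, this is what the parenthesisation in~\eqref{eq:matfree} is designed to prevent.
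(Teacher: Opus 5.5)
Your proof is correct and follows the paper's argument: substitute \eqref{eq:adj} into \eqref{eq:mod}, use Proposition~\ref{prop:strength} for $2m=\nnz(\mat{H})$ and $\vect{k}=\mat{H}\one_M$, and cost each product in the bracketed order, absorbing the $O(Md_s)$ diagonal scaling via $M\le\nnz(\mat{H})$. Your explicit memory accounting and the one-time $O(\nnz(\mat{H}))$ cost of forming $\mat{H}\mat{W}\one_M$ and $\vect{k}$ add details that the paper's proof leaves implicit.
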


\begin{proof}
Substituting~\eqref{eq:adj} into~\eqref{eq:mod} and using
Proposition~\ref{prop:strength} gives~\eqref{eq:matfree}. The sparse
products $\mat{H}^\top\mat{S}$ and
$\mat{H}(\mat{W}(\mat{H}^\top\mat{S}))$ each require
$O(\nnz(\mat{H})d_s)$ operations. The diagonal and rank-one terms
require $O(Nd_s)$ operations, while applying the diagonal matrix
$\mat{W}$ costs $O(Md_s)$. Since
$M\leq\nnz(\mat{H})$, the total cost is
$O((\nnz(\mat{H})+N)d_s)$. No $N\times N$ matrix is formed.
\end{proof}

\begin{proposition}[Convergence of the projected ascent]
\label{prop:ascent}
Let
\[
\lambda_1\geq\lambda_2\geq\cdots\geq\lambda_N
\]
be the eigenvalues of the symmetric modularity matrix $\mat{B}$, and
suppose that
\[
\lambda_{d_s}\geq 0,
\qquad
\lambda_{d_s}>\lambda_{d_s+1}.
\]
For any $c\in(0,1)$, choose the step size
\begin{equation}
\eta
=
\frac{2c}
{\lambda_1+|\min(\lambda_N,0)|}.
\label{eq:eta_proof}
\end{equation}
Consider the projected iteration
\begin{equation}
\mat{S}^{(t+1)}
=
\qf\!\left(
\mat{S}^{(t)}+\eta\mat{B}\mat{S}^{(t)}
\right),
\qquad
\mat{S}^{(0)}=\qf(\boldsymbol{\Omega}),
\label{eq:prop_iteration}
\end{equation}
where $\boldsymbol{\Omega}$ has independent standard Gaussian entries.
Then, with probability one over $\boldsymbol{\Omega}$,
\[
\operatorname{span}(\mat{S}^{(t)})
\longrightarrow
\operatorname{span}
\{\vect{u}_1,\ldots,\vect{u}_{d_s}\},
\]
where $\vect{u}_i$ is an eigenvector of $\mat{B}$ associated with
$\lambda_i$. This limiting subspace is the unique solution of
\[
\max_{\mat{S}^{\top}\mat{S}=\mat{I}_{d_s}}
\tr(\mat{S}^{\top}\mat{B}\mat{S}).
\]
Moreover, the convergence is geometric, with asymptotic ratio
\begin{equation}
\rho
=
\max_{i>d_s}
\frac{|1+\eta\lambda_i|}
{1+\eta\lambda_{d_s}}
<1.
\label{eq:rho_proof}
\end{equation}
\end{proposition}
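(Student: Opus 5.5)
The plan is to recognise the iteration as orthogonal (subspace) iteration applied to the fixed symmetric matrix $\mat{P}=\mat{I}_N+\eta\mat{B}$. The steps are: reduce to subspace iteration, check that the step size makes $\mat{P}$ positive definite with the right dominant eigenvalues, invoke the classical convergence theorem, and finish with the Ky Fan characterisation of the maximiser.

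\emph{Reduction.} Because $\qf(\mat{Y}\mat{R})$ and $\qf(\mat{Y})$ span the same subspace for any invertible $\mat{R}$, induction gives $\operatorname{span}(\mat{S}^{(t)})=\operatorname{span}(\mat{P}^t\boldsymbol\Omega)$. This holds as long as every intermediate matrix keeps full column rank, which follows once $\mat{P}$ is shown to be invertible. The iteration is therefore exactly subspace iteration with $\mat{P}$.

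\emph{Spectrum of $\mat{P}$.} $\mat{P}$ has the eigenvectors $\vect{u}_i$ of $\mat{B}$ with eigenvalues $\mu_i=1+\eta\lambda_i$, which are ordered like the $\lambda_i$ because $\eta>0$. We may assume $\lambda_1>0$. Indeed, $\tr\mat{B}=-\|\vect{k}\|^2/(2m)<0$ since $\tr\mat{A}=0$, so if $\lambda_1\le 0$ then $\lambda_{d_s}\ge0$ forces a degenerate case that I would exclude or handle separately. The bound needed is $|\mu_i|<\mu_{d_s}$ for all $i>d_s$.
\begin{itemize}
\item For $\lambda_i\ge0$, $i>d_s$: $0<\mu_i<\mu_{d_s}$ by the strict gap.
\item For $\lambda_i<0$: $\eta|\lambda_i|\le \eta|\lambda_N|\le 2c<2$, so $\mu_i\in(-1,1]$. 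Hence $|\mu_i|<1\le\mu_{d_s}$, using $\lambda_{d_s}\ge0$.
\end{itemize}
This is where both hypotheses and the particular choice of $\eta$ enter. In fact $\eta|\lambda_N|<1$ also holds, since $\lambda_1>0$ gives $\eta|\lambda_N|<2c\,|\lambda_N|/|\lambda_N|$ with room to spare. So $\mat{P}$ is positive definite, hence invertible, and $\rho=\max_{i>d_s}|\mu_i|/\mu_{d_s}<1$.

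\emph{Generic start and convergence.} Let $\mat{U}_1=[\vect{u}_1,\ldots,\vect{u}_{d_s}]$ and $\mat{U}_2$ be the remaining eigenvectors. Since $\boldsymbol\Omega$ is Gaussian, $\mat{U}_1^\top\boldsymbol\Omega$ is a $d_s\times d_s$ matrix with an absolutely continuous distribution. Its determinant is a nonzero polynomial in the entries, so it vanishes with probability zero, and $\mat{U}_1^\top\boldsymbol\Omega$ is invertible almost surely. Write
\[
\mat{P}^t\boldsymbol\Omega=\mat{U}_1\boldsymbol\Lambda_1^t\mat{C}_1+\mat{U}_2\boldsymbol\Lambda_2^t\mat{C}_2,
\qquad \mat{C}_k=\mat{U}_k^\top\boldsymbol\Omega .
\]
Right-multiplying by $(\boldsymbol\Lambda_1^t\mat{C}_1)^{-1}$ does not change the span and yields
\[
\mat{U}_1+\mat{U}_2\boldsymbol\Lambda_2^t\mat{C}_2\mat{C}_1^{-1}\boldsymbol\Lambda_1^{-t}.
\]
The perturbation term has norm at most $\rho^t\|\mat{C}_2\mat{C}_1^{-1}\|$. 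The standard $\tan\theta$ bound on principal angles then gives $\|\sin\Theta(\operatorname{span}\mat{S}^{(t)},\operatorname{span}\mat{U}_1)\|=O(\rho^t)$, which is geometric convergence with ratio $\rho$.

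\emph{Optimality and uniqueness.} By Ky Fan's maximum principle, $\max_{\mat{S}^\top\mat{S}=\mat{I}}\tr(\mat{S}^\top\mat{B}\mat{S})=\sum_{i\le d_s}\lambda_i$. Equality holds iff $\operatorname{span}\mat{S}$ is an invariant subspace carrying the top $d_s$ eigenvalues, and that subspace is unique as a subspace because $\lambda_{d_s}>\lambda_{d_s+1}$.

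The main obstacle is the spectrum step: showing that negative eigenvalues cannot dominate after the shift, which is exactly what the step size $\eta$ is designed to guarantee. The almost-sure genericity argument is routine but must be stated explicitly.
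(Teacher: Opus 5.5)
Your route is the paper's route: shift to $\mat{P}=\mat{I}_N+\eta\mat{B}$, show $|1+\eta\lambda_i|<1+\eta\lambda_{d_s}$ for $i>d_s$ by treating negative and non-negative $\lambda_i$ separately, recognise orthogonal iteration, use Gaussian genericity, and identify the limit by Ky Fan. The one real difference is in the convergence step. The paper cites Golub--Van Loan, whereas you derive convergence from $\mat{P}^t\boldsymbol\Omega=\mat{U}_1\boldsymbol\Lambda_1^t\mat{C}_1+\mat{U}_2\boldsymbol\Lambda_2^t\mat{C}_2$. That is self-contained and gives the explicit bound $\|\tan\Theta_t\|\le\rho^t\|\mat{C}_2\mat{C}_1^{-1}\|$, not just an asymptotic rate.

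There is, however, a false step. You claim $\eta|\lambda_N|<1$ ``with room to spare'' and conclude that $\mat{P}$ is positive definite. The step size only gives $\eta|\lambda_N|=2c|\lambda_N|/(\lambda_1+|\lambda_N|)<2c$. This exceeds $1$ whenever $c>1/2$ and $|\lambda_N|>\lambda_1/(2c-1)$; for the paper's $c=0.9$, that means whenever $|\lambda_N|>1.25\,\lambda_1$. In that case $\mat{P}$ has negative eigenvalues, and nothing in the hypotheses prevents it from being singular (an eigenvalue $\lambda_i=-1/\eta$). You used invertibility of $\mat{P}$ to guarantee that every iterate keeps full column rank, so that justification fails as written.

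The conclusion survives, and you do not need $\mat{P}$ to be invertible. From your own decomposition, $\mat{U}_1^\top\mat{P}^t\boldsymbol\Omega=\boldsymbol\Lambda_1^t\mat{C}_1$ is invertible for every $t$, because $\mu_1,\dots,\mu_{d_s}\ge 1$ and $\mat{C}_1$ is invertible almost surely. Hence $\mat{P}^t\boldsymbol\Omega$ has rank $d_s$, every QR step is well defined, and $\operatorname{span}(\mat{S}^{(t)})=\operatorname{span}(\mat{P}^t\boldsymbol\Omega)$ follows by induction. Nothing else in your argument needs positive definiteness: the bound $\rho<1$ and the $\tan\Theta$ estimate use only $|\mu_i|$.

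You also need not set aside the case $\lambda_1\le 0$. Since $\mat{B}\one_N=\vect{k}-\vect{k}\,(\one_N^\top\vect{k})/(2m)=\vect{0}$, one always has $\lambda_1\ge 0$. The remaining case $\lambda_1=\dots=\lambda_{d_s}=0$ is allowed by the hypotheses and is already covered by your bullet for negative eigenvalues, where $\eta|\lambda_N|=2c$. Your trace argument also shows $\lambda_N<0$, so $\eta$ is always well defined. The only place you used $\lambda_1>0$ was the positive-definiteness claim. One small slip: for $\lambda_i<0$ the interval is $(-1,1)$, not $(-1,1]$, and the open endpoint is what gives $|\mu_i|<1$.
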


\begin{proof}
The proof has two parts. First, we identify the subspace that maximizes
the relaxed modularity objective. Second, we show that the proposed
iteration converges to this subspace.

Because $\mat{B}$ is symmetric, it admits the eigendecomposition
\[
\mat{B}
=
\mat{U}\boldsymbol{\Lambda}\mat{U}^{\top},
\qquad
\boldsymbol{\Lambda}
=
\diag(\lambda_1,\ldots,\lambda_N),
\]
with
\[
\lambda_1\geq\lambda_2\geq\cdots\geq\lambda_N.
\]
By the Ky Fan maximum principle \citep{fan1949theorem},
\[
\max_{\mat{S}^{\top}\mat{S}=\mat{I}_{d_s}}
\tr(\mat{S}^{\top}\mat{B}\mat{S})
=
\sum_{i=1}^{d_s}\lambda_i.
\]
Hence the maximizing subspace is the span of the eigenvectors
corresponding to the largest $d_s$ eigenvalues. Because
$\lambda_{d_s}>\lambda_{d_s+1}$, this subspace is unique:
\[
\mathcal{U}_*
=
\operatorname{span}
\{\vect{u}_1,\ldots,\vect{u}_{d_s}\}.
\]
It remains to show that the iteration in~\eqref{eq:prop_iteration}
converges to $\mathcal{U}_*$.

Define the shifted matrix
\[
\mat{C}
=
\mat{I}_N+\eta\mat{B}.
\]
Then~\eqref{eq:prop_iteration} can be written as
\[
\mat{S}^{(t+1)}
=
\qf\!\left(\mat{C}\mat{S}^{(t)}\right).
\]
Since $\mat{C}$ is an affine transformation of $\mat{B}$, it has the
same eigenvectors, with eigenvalues
\[
\mu_i=1+\eta\lambda_i,
\qquad i=1,\ldots,N.
\]

This shift is important because orthogonal iteration is governed by
the eigenvalues of the iteration matrix in \emph{magnitude}, whereas
the relaxed modularity objective is governed by the eigenvalues of
$\mat{B}$ in algebraic order. Thus, without controlling the negative
part of the spectrum, a sufficiently negative eigenvalue of $\mat{B}$
could produce an eigenvalue of $\mat{C}$ with large magnitude and
become dominant in the iteration even though it does not maximize the
modularity objective. The choice of $\eta$ in~\eqref{eq:eta_proof}
prevents this from happening.

For every $i\leq d_s$, the assumption $\lambda_{d_s}\geq0$ gives
\[
\mu_i
=
1+\eta\lambda_i
\geq1.
\]
In particular,
\[
\mu_{d_s}
=
1+\eta\lambda_{d_s}
\geq1.
\]

Now consider any negative eigenvalue $\lambda_i<0$. If
$\lambda_N<0$, then
\[
\eta|\lambda_i|
\leq
\eta|\lambda_N|
=
\frac{2c|\lambda_N|}
{\lambda_1+|\lambda_N|}
<2c<2.
\]
If $\lambda_N\geq0$, there are no negative eigenvalues and this case
does not arise. Therefore, whenever $\lambda_i<0$,
\[
-1
<
1-\eta|\lambda_i|
<
1,
\]
and hence
\begin{equation}
|1+\eta\lambda_i|<1.
\label{eq:negative_inside}
\end{equation}
Thus every negative eigenvalue of $\mat{B}$ is mapped strictly inside
the unit circle by the shift.

For the remaining non-negative eigenvalues with $i>d_s$, the ordering
is preserved because $\eta>0$. Hence
\[
1+\eta\lambda_i
\leq
1+\eta\lambda_{d_s+1}
<
1+\eta\lambda_{d_s}.
\]
Combining this with~\eqref{eq:negative_inside} and
$\mu_{d_s}\geq1$ gives
\[
\max_{i>d_s}|1+\eta\lambda_i|
<
1+\eta\lambda_{d_s}.
\]
Consequently,
\[
\rho
=
\max_{i>d_s}
\frac{|1+\eta\lambda_i|}
{1+\eta\lambda_{d_s}}
<1.
\]

It follows that
\[
\mu_1,\ldots,\mu_{d_s}
\]
are precisely the $d_s$ eigenvalues of $\mat{C}$ having largest
magnitude. Thus, the eigenspace that maximizes the relaxed modularity
objective is also the dominant invariant subspace of the shifted
matrix $\mat{C}$.

The update in~\eqref{eq:prop_iteration} is therefore an orthogonal
iteration for $\mat{C}$. The QR factor $\qf(\cdot)$ only
re-orthonormalises the columns and does not change their span. Standard
convergence theory for orthogonal iteration
\citep[Sect.~8.2]{golub2013matrix} therefore gives geometric
convergence to the dominant invariant subspace, with asymptotic ratio
$\rho$ in~\eqref{eq:rho_proof}.

Finally, the Gaussian initialization satisfies the usual
non-degeneracy condition for orthogonal iteration with probability one.
Indeed, the projection of $\boldsymbol{\Omega}$ onto any fixed
$d_s$-dimensional subspace has full column rank almost surely, because
rank deficiency corresponds to a measure-zero subset of the continuous
Gaussian sample space. Hence the initial subspace has a nonzero
component in every direction of the dominant eigenspace almost surely.

Therefore,
\[
\operatorname{span}(\mat{S}^{(t)})
\longrightarrow
\mathcal{U}_*
=
\operatorname{span}
\{\vect{u}_1,\ldots,\vect{u}_{d_s}\}
\qquad\text{with probability one}.
\]
Since $\mathcal{U}_*$ is the unique maximizing subspace of the relaxed
modularity objective by the Ky Fan maximum principle
\citep{fan1949theorem}, the proposition follows.
\end{proof}
\begin{remark}\label{rem:impl}
The implementation computes $\lambda_1$ and $\lambda_N$ with a Lanczos solver when $N\le20{,}000$ and otherwise (or when the Lanczos solver fails) uses a power-iteration estimate $\hat\rho$ of the spectral radius, setting $\lambda_1=-\lambda_N=\hat\rho$ (so $\eta=c/\hat\rho$). It stops when the relative change of $\tr(\mat{S}^\top\mat{B}\mat{S})$ stays below $10^{-5}$ for five consecutive checks (every five iterations, at least 20 iterations) or after 1{,}000 iterations. $\mat{S}$ is therefore an approximation of the leading eigenspace. In the power-iteration branch, the condition $\eta|\lambda_N|<2$ used in the proof of Proposition~\ref{prop:ascent} requires $\hat\rho>c|\lambda_N|/2$; on DBLP, the only dataset in this branch, $\eta|\lambda_N|=0.86$. The assumptions of Proposition~\ref{prop:ascent} hold on eight of the nine datasets; on Zoo, $\mat{B}$ has only 12 positive eigenvalues and $\lambda_{32}=\lambda_{33}$, so the leading 32-dimensional eigenspace is not unique. On the five datasets we checked by exact eigendecomposition (Zoo, Cora-CC, Citeseer, Cora-CA and NTU2012), $\rho$ ranged from 0.9965 to 0.9999 where defined, and after early stopping (235--430 iterations) $\tr(\mat{S}^\top\mat{B}\mat{S})$ was within 0.1\% of its maximum, although individual eigenvectors close to $\lambda_{d_s}$ were not resolved. Stage~B uses $\mat{S}$ mainly through the projector $\mat{S}\mat{S}^\top$ and pairwise cosine similarities, which depend only on the subspace.
\end{remark}

\subsection{Stage B: feature summaries and hyperedge utilities}\label{sec:stageB}

\paragraph{Propagation.}
Let $\mat{R}=\mat{D}_v^{-1/2}\mat{H}\mat{D}_e^{-1/2}$ be the normalised incidence and, for any feature matrix $\mat{Y}$, let $\mathcal{T}_{\mat{R}}(\mat{Y})=\alpha\mat{Y}+(1-\alpha)\mat{R}\mat{R}^\top\mat{Y}$ be a lazy propagation step with $\alpha\in[0,1]$. We use three propagation scales,
\begin{equation}\label{eq:scales}
\boldsymbol\Phi_0(\mat{Y};\mat{R})=\mat{Y},\qquad
\boldsymbol\Phi_1(\mat{Y};\mat{R})=\mathcal{T}_{\mat{R}}(\mat{Y}),\qquad
\boldsymbol\Phi_2(\mat{Y};\mat{R})=\frac1K\sum_{k=1}^{K}\mathcal{T}_{\mat{R}}^{k}(\mat{Y}),
\end{equation}
where the third scale is the multi-hop average of SSGC \citep{zhu2021ssgc}.

\paragraph{Corruptions.}
Two corruptions are used throughout. Feature masking $\FM_\rho(\mat{Y})$ sets each entry of $\mat{Y}$ to zero independently with probability $\rho$ and rescales the remaining entries by $1/(1-\rho)$. Membership masking $\MM_\rho(\mat{H})$ removes $\lfloor\rho|e_j|\rfloor$ uniformly chosen members from each hyperedge $e_j$ while keeping at least two members. We write $\mat{H}^{\mathrm{b}}=\MM_\rho(\mat{H})$ and $\mat{R}^{\mathrm{b}}$ for its normalised incidence.

\paragraph{Feature summary $\mat{F}$.}
Let $\bar{\mat{X}}$ be $\mat{X}$ with rows scaled to unit $\ell_2$ norm and columns centred. To reduce redundancy with Stage~A, we partially remove the component of the features that lies in the span of $\mat{S}$,
\begin{equation}
\tilde{\mat{X}}=\bar{\mat{X}}-\beta\,\mat{S}\mat{S}^\top\bar{\mat{X}},\qquad\beta\in[0,1].
\end{equation}
We form two views, one with masked features on the original structure and one with the original features on masked structure,
\[
\mat{Y}^{\mathrm{a}}_\ell=\boldsymbol\Phi_\ell\big(\FM_\rho(\tilde{\mat{X}});\mat{R}\big),\qquad
\mat{Y}^{\mathrm{b}}_\ell=\boldsymbol\Phi_\ell\big(\tilde{\mat{X}};\mat{R}^{\mathrm{b}}\big),\qquad \ell\in\{0,1,2\},
\]
and measure, for every node and scale, how well the two views agree, $c_{i\ell}=\cosr_i(\mat{Y}^{\mathrm{a}}_\ell,\mat{Y}^{\mathrm{b}}_\ell)$. The agreements define per-node scale weights
\begin{equation}\label{eq:scaleweights}
\pi_{i\ell}=(1-3\omega)\,\frac{\exp(c_{i\ell}/\tau)}{\sum_{\ell'=0}^{2}\exp(c_{i\ell'}/\tau)}+\omega,
\end{equation}
with temperature $\tau>0$ and floor $\omega\in[0,1/3]$, so that each node emphasises the scales at which its representation is least sensitive to corruption. With $\boldsymbol\Pi_\ell=\diag(\pi_{1\ell},\dots,\pi_{N\ell})$, let $\mat{Y}^{\mathrm{cat}}=[\boldsymbol\Pi_0\mat{Y}^{\mathrm{a}}_0\,\|\,\boldsymbol\Pi_1\mat{Y}^{\mathrm{a}}_1\,\|\,\boldsymbol\Pi_2\mat{Y}^{\mathrm{a}}_2]\in\R^{N\times3D}$. The feature summary is its rank-$d_f$ truncated singular value decomposition, $\mat{F}=\gamma_F\,\mat{U}_{d_f}\boldsymbol\Sigma_{d_f}$, where $\gamma_F$ rescales $\mat{F}$ to $\|\mat{F}\|_F^2=d_f$ and each column sign is fixed so that its entry of largest magnitude is positive. If $\mat{Y}^{\mathrm{cat}}$ has fewer than $d_f+1$ columns (Zoo), $\mat{F}$ is narrower; it is never zero-padded.

\paragraph{Incidence summary $\mat{G}$.}
$\mat{G}=\gamma_G\,\mat{U}^{\mat{R}}_{d_g}\boldsymbol\Sigma^{\mat{R}}_{d_g}$ consists of the $d_g$ leading left singular vectors of $\mat{R}$ scaled by their singular values, with the same rescaling and sign convention. It is computed by a dense SVD when $\min(N,M)\le3{,}000$ and otherwise by LOBPCG \citep{knyazev2001lobpcg}, with randomised subspace iteration \citep{halko2011finding} as a fallback when the LOBPCG residual exceeds $10^{-2}$.

\paragraph{Hyperedge utility $\vect{u}$.}
Hyperedges differ in how informative they are; some group nodes with dissimilar attributes or are dominated by high-degree nodes. HyperFuse assigns every hyperedge a weight that reflects how stable the representations of its members are under the two corruptions. For each node we compute, from the unresidualised features,
\begin{align*}
\sigma_i&=\frac14\Big(\frac13\sum_{\ell=0}^{2}\cosr_i\big(\boldsymbol\Phi_\ell(\bar{\mat{X}};\mat{R}),\boldsymbol\Phi_\ell(\FM_\rho(\bar{\mat{X}});\mat{R})\big)+\frac13\sum_{\ell=0}^{2}\cosr_i\big(\boldsymbol\Phi_\ell(\bar{\mat{X}};\mat{R}),\boldsymbol\Phi_\ell(\bar{\mat{X}};\mat{R}^{\mathrm{b}})\big)\Big),\\
\kappa_i&=\frac13\sum_{\ell=0}^{2}\cosr_i\big(\boldsymbol\Phi_\ell(\FM_\rho(\bar{\mat{X}});\mat{R}),\boldsymbol\Phi_\ell(\bar{\mat{X}};\mat{R}^{\mathrm{b}})\big),
\end{align*}
the stability of $v_i$ with respect to each corruption and the agreement between the two corrupted views. For hyperedge $e_j$, with $\operatorname{mean}_j$ and $\operatorname{sd}_j$ the mean and standard deviation over its members,
\[
r_j=0.65\operatorname{mean}_j(\sigma)+0.35\operatorname{mean}_j(\kappa)-0.20\operatorname{sd}_j(\sigma),
\]
and $r_j^{0}$ is the value of $0.65\operatorname{mean}(\sigma)+0.35\operatorname{mean}(\kappa)$ averaged over three uniformly random node sets of size $|e_j|$. The stability target $t_j\in[0,1]$ is the normalised rank of $r_j-r_j^{0}$ among all hyperedges, so that hyperedges are compared with size-matched random groups. The coefficients $0.65$, $0.35$, $0.20$ and the number of random sets are fixed design constants.

Stability targets from a single corruption draw are noisy. We therefore regress them on 13 descriptors $\boldsymbol\psi_j\in\R^{13}$ of each hyperedge: the mean pairwise cosine similarity of its members in $\mat{S}$, $\mat{F}$ and $\mat{G}$ (three descriptors); the mean, minimum and standard deviation of its members' scale agreements $c_{i\ell}$ (three); $|e_j|$ and $\log(1+|e_j|)$ (two); the mean member degree and its coefficient of variation (two); and the mean absolute entry of the normalised member centroid in $\mat{S}$, $\mat{F}$ and $\mat{G}$ (three). A ridge regression on standardised descriptors, fitted by 160 gradient steps (step size $0.03$, penalty $0.05$), gives predictions $\hat t_j$ clipped to $[0,1]$; this projects the targets onto the part explained by the descriptors. With $z_j$ the standardised predictions, the utilities are
\begin{equation}\label{eq:utility}
\tilde u_j=\exp\big(\operatorname{clip}(0.8\,z_j,-2,2)\big),\qquad
\vect{u}=\Pi_{[u_{\min},u_{\max}]}^{\,\mathrm{mean}=1}(\tilde{\vect{u}}),
\end{equation}
where $\Pi$ rescales $\tilde{\vect{u}}$ to mean one and then maps it to weights in $[u_{\min},u_{\max}]=[0.65,1.35]$ with mean one by alternating clipping and redistribution of the excess over the unclipped weights. The bounds keep every hyperedge in the model and limit the influence of the utility estimate.

\subsection{Stage C: utility-weighted correlation-based self-supervision}\label{sec:stageC}

\paragraph{Aggregation.}
Let $\boldsymbol\mu\in\{0,1\}^{\nnz(\mat{H})}$ be a keep-mask over incidences and $\mat{H}^{\boldsymbol\mu}$ the corresponding masked incidence matrix, with hyperedge sizes $\mat{D}_e^{\boldsymbol\mu}=\diag((\mat{H}^{\boldsymbol\mu})^\top\one_N)$ and utility-weighted node degrees $\mat{D}_u^{\boldsymbol\mu}=\diag(\mat{H}^{\boldsymbol\mu}\vect{u})$. Node-to-hyperedge aggregation is the mean over retained members, and hyperedge-to-node aggregation is the utility-weighted mean over retained incident hyperedges:
\begin{equation}\label{eq:agg}
\mathcal{A}^{\boldsymbol\mu}(\mat{Y})=(\mat{D}_u^{\boldsymbol\mu})^{-1}\mat{H}^{\boldsymbol\mu}\diag(\vect{u})\,(\mat{D}_e^{\boldsymbol\mu})^{-1}(\mat{H}^{\boldsymbol\mu})^\top\mat{Y}.
\end{equation}
Nodes without retained incidences receive a zero aggregate. Only the non-zero values of the sparse operators depend on $\boldsymbol\mu$, so their index structure is built once.

\paragraph{Encoder.}
The encoder $f_{\boldsymbol\theta}$ receives the raw features with rows scaled to unit $\ell_2$ norm, $\hat{\mat{X}}$ (kept sparse when fewer than 25\% of its entries are non-zero), and the Stage-B summaries $\mat{X}_{\mathrm{side}}=[\mat{F}\,\|\,\mat{G}]$. $\Drop_p(\cdot)$ zeroes each entry independently with probability $p$ (without rescaling). With hidden width $h$, the first layer computes
\begin{align*}
\big[\mat{Y}^{(1)}_{\mathrm{s}}\,\|\,\mat{Y}^{(1)}_{\mathrm{n}}\big]&=\Drop_{p_f}(\hat{\mat{X}})\,\boldsymbol\Theta_1+\Drop_{p_f}(\mat{X}_{\mathrm{side}})\,\boldsymbol\Theta_{\mathrm{side}}+\one_N\vect{b}_1^\top\in\R^{N\times2h},\\
\mat{Q}&=\operatorname{GELU}\big(\operatorname{LN}\big(\mat{Y}^{(1)}_{\mathrm{s}}+\mathcal{A}^{\boldsymbol\mu}(\mat{Y}^{(1)}_{\mathrm{n}})\big)\big)\in\R^{N\times h},
\end{align*}
where the subscripts s and n denote the self and neighbour halves, LN is layer normalisation \citep{ba2016layernorm} and GELU the Gaussian error linear unit \citep{hendrycks2016gelu}. The second layer computes $[\mat{Y}^{(2)}_{\mathrm{s}}\,\|\,\mat{Y}^{(2)}_{\mathrm{n}}]=\mat{Q}\boldsymbol\Theta_2+\one_N\vect{b}_2^\top\in\R^{N\times 2d}$ and $\mat{Z}_0=\mat{Y}^{(2)}_{\mathrm{s}}+\mathcal{A}^{\boldsymbol\mu}(\mat{Y}^{(2)}_{\mathrm{n}})$. Applying the linear maps before aggregation is equivalent to aggregating first, because $\mathcal{A}^{\boldsymbol\mu}$ acts on rows and the weights on columns; for the bias terms this holds for every node with a retained incidence, since the rows of $\mathcal{A}^{\boldsymbol\mu}$ then sum to one. The output adds $K'$ parameter-free propagation steps,
\begin{equation}\label{eq:outprop}
\mat{Z}_k=\alpha'\mat{Z}_{k-1}+(1-\alpha')\mathcal{A}^{\boldsymbol\mu}(\mat{Z}_{k-1}),\quad k=1,\dots,K',\qquad
f_{\boldsymbol\theta}=\frac{1}{K'+1}\sum_{k=0}^{K'}\mat{Z}_k\in\R^{N\times d}.
\end{equation}
The parameters are $\boldsymbol\theta=\{\boldsymbol\Theta_1,\boldsymbol\Theta_{\mathrm{side}},\vect{b}_1,\boldsymbol\Theta_2,\vect{b}_2\}$ and those of LN.

\paragraph{Objective.}
In every epoch we draw two independent views, each with its own incidence keep-mask ($\mu_q\sim\mathrm{Bernoulli}(1-p_m)$ independently per incidence) and its own dropout masks, and obtain $\mat{Z}'$ and $\mat{Z}''$. Each is standardised column-wise, $\hat{\mat{Z}}=N^{-1/2}(\mat{Z}-\one_N\bar{\vect{z}}^\top)\diag(\vect{s})^{-1}$, with column means $\bar{\vect{z}}$ and column sample standard deviations $\vect{s}$ (plus $10^{-6}$), so that each column of $\hat{\mat{Z}}$ has squared norm $(N-1)/N$. The loss of CCA-SSG \citep{zhang2021ccassg} is
\begin{equation}\label{eq:loss}
\mathcal{L}(\boldsymbol\theta)=\big\|\hat{\mat{Z}}'-\hat{\mat{Z}}''\big\|_F^2+\lambda\Big(\big\|\hat{\mat{Z}}'^\top\hat{\mat{Z}}'-\mat{I}_d\big\|_F^2+\big\|\hat{\mat{Z}}''^\top\hat{\mat{Z}}''-\mat{I}_d\big\|_F^2\Big).
\end{equation}
The first term rewards per-dimension correlation between the two views and the second decorrelates the embedding dimensions, which prevents collapse without negative pairs, projection heads or target networks. Each evaluation of~\eqref{eq:loss} costs $O(Nd^2)$. We minimise $\mathcal{L}$ with full-batch Adam \citep{kingma2015adam} for a fixed number of epochs.

\paragraph{Output.}
After training, $f_{\boldsymbol\theta}$ is evaluated once on the uncorrupted hypergraph ($\boldsymbol\mu=\one$, no dropout); the result is centred column-wise and its rows are scaled to unit $\ell_2$ norm, giving $\mat{Z}$.

\paragraph{Relation to SE-HSSL.}
SE-HSSL also uses CCA-based objectives \citep{li2024sehssl}. HyperFuse differs in using a single node-level objective, no hyperedge- or membership-level terms, a fixed budget of 100 epochs, hyperedge-to-node aggregation weighted by Stage-B utilities, and precomputed structural and spectral inputs.

\begin{algorithm}[t]
\caption{HyperFuse}\label{alg:hyperfuse}
\begin{algorithmic}[1]
\Require incidence $\mat{H}$, features $\mat{X}$, width $d$, seed
\Ensure embeddings $\mat{Z}\in\R^{N\times d}$
\State remove hyperedges with $|e_j|<2$; set $d_s=d/4$, $d_g=d/4$, $d_f=d-d_s-d_g$
\Statex \textbf{Stage A}
\State $\mat{S}\gets\qf(\boldsymbol\Omega)$; choose $\eta$ (Proposition~\ref{prop:ascent}, Remark~\ref{rem:impl})
\Repeat\ $\mat{S}\gets\qf(\mat{S}+\eta\mat{B}\mat{S})$ \Comment{$\mat{B}\mat{S}$ via~\eqref{eq:matfree}}
\Until{relative change of $\tr(\mat{S}^\top\mat{B}\mat{S})<10^{-5}$ or 1{,}000 iterations}
\Statex \textbf{Stage B}
\State $\bar{\mat{X}}\gets$ row-normalised, centred $\mat{X}$; $\tilde{\mat{X}}\gets\bar{\mat{X}}-\beta\mat{S}\mat{S}^\top\bar{\mat{X}}$; $\mat{H}^{\mathrm{b}}\gets\MM_\rho(\mat{H})$
\State compute $\mat{Y}^{\mathrm{a}}_\ell,\mat{Y}^{\mathrm{b}}_\ell$, weights $\pi_{i\ell}$~\eqref{eq:scaleweights}; $\mat{F}\gets$ rank-$d_f$ SVD of $\mat{Y}^{\mathrm{cat}}$
\State $\mat{G}\gets$ rank-$d_g$ SVD of $\mat{R}$
\State compute targets $t_j$ and descriptors $\boldsymbol\psi_j$; fit ridge regression; $\vect{u}\gets$~\eqref{eq:utility}
\Statex \textbf{Stage C}
\For{epoch $=1,\dots,100$}
\State draw two views; $\mat{Z}',\mat{Z}''\gets f_{\boldsymbol\theta}$ on each view; update $\boldsymbol\theta$ with $\nabla\mathcal{L}$~\eqref{eq:loss}
\EndFor
\State $\mat{Z}\gets f_{\boldsymbol\theta}$ on the uncorrupted hypergraph; centre columns; normalise rows
\end{algorithmic}
\end{algorithm}

\subsection{Computational cost}\label{sec:complexity}

Let $n_H=\nnz(\mat{H})$ and $n_X=\nnz(\mat{X})$, and let $T_A\le1{,}000$ and $T_C=100$ be the numbers of Stage-A iterations and Stage-C epochs. By Proposition~\ref{prop:matfree}, one Stage-A iteration costs $O(n_Hd_s+Nd_s^2)$, the second term for the QR factorisation. Stage~B costs $O(Kn_HD+NDd_s)$ for propagation and residualisation, plus the truncated SVDs of the dense $N\times3D$ matrix $\mat{Y}^{\mathrm{cat}}$ and of $\mat{R}$, and $O(n_H(d_s+d_f+d_g))$ for the hyperedge descriptors; it stores dense $N\times D$ and $N\times3D$ arrays. One Stage-C epoch evaluates the encoder on two views at a cost of $O\big(n_Xh+N(d_f+d_g)h+n_H\big(h+(K'+1)d\big)+Nhd+Nd^2\big)$.
When $\min(N,M)\le3{,}000$, $\mat{G}$ is obtained by a dense SVD of $\mat{R}$, costing $O(NM\min(N,M))$ time and $O(NM)$ memory; otherwise LOBPCG is used.
For fixed widths, all stages are therefore linear in $n_H$ and $n_X$, apart from the dense feature operations of Stage~B, which are linear in $ND$, and the dense SVD of $\mat{R}$ used on small hypergraphs. By contrast, contrastive objectives that compare every node with all others, as in the official TriCL configurations without negative subsampling, require $O(N^2d)$ operations and $O(N^2)$ memory per epoch.

\section{Experimental setup}\label{sec:setup}

\subsection{Datasets}
We use nine public hypergraphs in the preprocessed form released with TriCL \citep{lee2023tricl} (Table~\ref{tab:datasets}): the co-citation hypergraphs Cora-CC, Citeseer and PubMed and the co-authorship hypergraphs Cora-CA and DBLP \citep{yadati2019hypergcn}; Zoo and 20News from the UCI repository \citep{dua2017uci}; and the 3D object datasets NTU2012 \citep{chen2003visual} and ModelNet40 \citep{wu2015shapenets}, with hypergraphs constructed from visual features as in \citet{feng2019hgnn}. In these versions, nodes that belong to no hyperedge are removed; Cora-CC, Citeseer, PubMed and Cora-CA therefore have fewer nodes than in their original releases (2{,}708, 3{,}312, 19{,}717 and 2{,}708 nodes). The datasets cover 101 to 41{,}302 nodes, hyperedges of up to 2{,}241 nodes, sparse binary, sparse real-valued and dense real-valued features, and 3 to 67 classes.

\begin{table}[t]
\caption{Dataset statistics. $n_A$ is the number of non-zeros that the Banerjee adjacency $\mat{A}$ would have if materialised. Features: B, binary or categorical; S, sparse real-valued; R, dense real-valued. Zoo contains one single-node hyperedge, which HyperFuse discards}\label{tab:datasets}
\begin{tabular}{@{}lrrrrrrrc@{}}
\toprule
Dataset & $N$ & $M$ & $\nnz(\mat{H})$ & $\max_j|e_j|$ & $n_A$ & $D$ & Classes & Feat. \\
\midrule
20News     & 16{,}242 & 100      & 65{,}451 & 2{,}241 & 53{,}268{,}400 & 100    & 4  & B \\
Citeseer   & 1{,}458  & 1{,}079  & 3{,}453  & 26      & 8{,}614        & 3{,}703 & 6  & B \\
Cora-CA    & 2{,}388  & 1{,}072  & 4{,}585  & 43      & 29{,}884       & 1{,}433 & 7  & B \\
Cora-CC    & 1{,}434  & 1{,}579  & 4{,}786  & 5       & 8{,}512        & 1{,}433 & 7  & B \\
DBLP       & 41{,}302 & 22{,}363 & 99{,}561 & 202     & 865{,}262      & 1{,}425 & 6  & B \\
ModelNet40 & 12{,}311 & 12{,}311 & 61{,}555 & 5       & 137{,}888      & 100    & 40 & R \\
NTU2012    & 2{,}012  & 2{,}012  & 10{,}060 & 5       & 20{,}026       & 100    & 67 & R \\
PubMed     & 3{,}840  & 7{,}963  & 34{,}629 & 171     & 247{,}654      & 500    & 3  & S \\
Zoo        & 101      & 43       & 1{,}717  & 93      & 10{,}100       & 16     & 7  & B \\
\botrule
\end{tabular}
\end{table}

\subsection{Baselines}
We compare with four self-supervised hypergraph embedding methods with public code: TriCL \citep{lee2023tricl}, SE-HSSL \citep{li2024sehssl}, VilLain \citep{lee2024villain} and HypeBoy \citep{kim2024hypeboy}. TriCL, SE-HSSL and HypeBoy use node features; VilLain uses only the hypergraph structure. Each method was run with the code and configurations released by its authors. For TriCL and SE-HSSL, this determines the number of epochs (TriCL 100--1{,}000, SE-HSSL 50--1{,}000) and the embedding width (TriCL 128--512, SE-HSSL 256--1{,}024). VilLain was run as in its official script: one model per number of virtual labels in $\{2,\dots,8\}$, each for up to 5{,}000 epochs with the original early stopping, followed by PCA to 128 dimensions. We kept the script's default of 100 propagation steps at inference on all datasets; the VilLain paper uses 10 on hypergraphs with fewer than 10{,}000 nodes. HypeBoy was run with its official code and configuration and outputs 128-dimensional embeddings. The only changes to the baseline code were initialising an early-stopping counter that the VilLain code referenced before assignment, and seeding every method with the benchmark seed. Table~\ref{tab:baselinecfg} in Appendix~\ref{app:config} lists the settings.

\subsection{HyperFuse configuration}
HyperFuse used a single default configuration on all datasets, with $d=128$ (Table~\ref{tab:hfconfig} in Appendix~\ref{app:config}); unlike TriCL and SE-HSSL, it has no per-dataset hyperparameters. The only data-dependent choice is the feature-dropout rate of Stage~C, which is set automatically from the feature type: $p_f=0.3$ for sparse or discrete features and $p_f=0$ for dense real-valued descriptors (NTU2012 and ModelNet40), for which entry masking removes information; the official TriCL configurations make a similar distinction for these two datasets.

\subsection{Evaluation protocol}
For every dataset, method and seed $\in\{0,1,2\}$, we generate one embedding and evaluate it as follows.

\paragraph{Classification.}
Following TriCL, we use random 10\%/10\%/80\% train/validation/test splits; the first three of TriCL's fixed splits are used for every method and seed, so that only the embedding changes between rows. Six downstream classifiers take the embedding as input features: logistic regression (LogReg) and a one-hidden-layer MLP with 256 units from scikit-learn \citep{pedregosa2011scikit}, trained on standardised embeddings of the training nodes, which ignore the hypergraph; and HGNN \citep{feng2019hgnn}, HNHN \citep{dong2020hnhn}, UniGCNII \citep{huang2021unignn} and AllSetTransformer \citep{chien2022allset} from the AllSet code base, which additionally use the hypergraph, trained for 500 epochs with the AllSet default hyperparameters and evaluated at the epoch of best validation accuracy. LogReg and the MLP are fitted on the training nodes only; the MLP uses scikit-learn's early stopping on 10\% of the training nodes, and neither uses the validation split. We report test accuracy and macro-F1 for each classifier, averaged over the three splits. As a reference, we also evaluate the raw node features in place of an embedding (Raw).

\paragraph{Clustering.}
We run $k$-means \citep{lloyd1982least} on the embedding with $k$ equal to the number of classes (10 initialisations, five $k$-means seeds) and report the mean adjusted Rand index \citep[ARI;][]{hubert1985comparing} and normalised mutual information \citep[NMI;][]{strehl2002cluster} with respect to the class labels.

\paragraph{Runtime.}
Embedding time is the wall-clock time of the embedding call, including all preprocessing performed inside each method and excluding dataset loading and downstream evaluation. All methods were run on the same workstation with an NVIDIA T1000 GPU with 8\,GB of memory. The baselines and Stage~C of HyperFuse ran on the GPU with PyTorch \citep{paszke2019pytorch}; Stages~A and B of HyperFuse ran on the CPU with NumPy and SciPy.

\paragraph{Aggregation and statistics.}
All scores are means over the three seeds; appendix tables also give standard deviations over seeds. SE-HSSL did not complete on DBLP within 24\,h (Sect.~\ref{sec:dblp}). We therefore report averages over the eight datasets on which all methods completed and report DBLP separately. For each classifier, and for NMI and ARI, we compare the five methods with the Friedman test on per-dataset ranks \citep{friedman1937use,demsar2006statistical}, and HyperFuse with each baseline with the two-sided exact Wilcoxon signed-rank test on the eight per-dataset scores \citep{wilcoxon1945individual}; we report uncorrected $p$-values. With eight datasets the smallest attainable $p$-value is $0.0078$, so these tests have limited power to detect small differences.

\section{Results}\label{sec:results}

\subsection{Embedding time}\label{sec:runtime}

Figure~\ref{fig:runtime} shows the embedding time of every method on every dataset, and the upper block of Table~\ref{tab:main} summarises it over the eight common datasets. HyperFuse was the fastest method on all nine datasets. It required between 0.7\,s (Zoo) and 17.7\,s (20News) on the eight common datasets, 8.7\,s on average and 1.2\,min in total. The baselines required 832.4\,min (TriCL), 335.6\,min (SE-HSSL), 225.7\,min (VilLain) and 157.9\,min (HypeBoy) in total. The geometric-mean speed-up of HyperFuse was 143$\times$ over TriCL, 96$\times$ over SE-HSSL, 179$\times$ over VilLain and 13$\times$ over HypeBoy, the fastest baseline. Relative to the fastest baseline on each dataset, the speed-up ranged from 1.7$\times$ on NTU2012, where HypeBoy needed 11.5\,s, to 114$\times$ on 20News, where the fastest baseline, VilLain, needed 34\,min.

The time of HyperFuse grew slowly with dataset size. PubMed (34{,}629 incidences) took 7.3\,s, compared with 9.1\,h for TriCL, 3.5\,h for SE-HSSL, 1.4\,h for VilLain and 61\,s for HypeBoy. On 20News, whose hyperedges contain up to 2{,}241 nodes, HyperFuse took 17.7\,s, compared with 2.3\,h for HypeBoy; the matrix-free operator of Proposition~\ref{prop:matfree} avoids the $53.3\times10^6$ non-zeros of the materialised adjacency. The embedding time of HyperFuse also varied little across seeds (Table~\ref{tab:app_time}).

\begin{figure}[t]
\centering
\includegraphics[width=\textwidth]{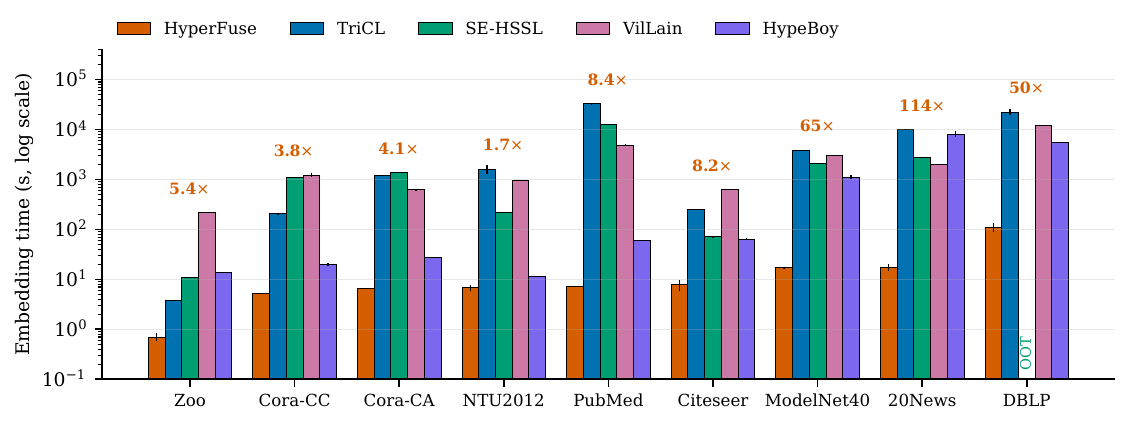}
\caption{Embedding time per dataset (mean over three seeds, error bars show one standard deviation; logarithmic scale). Datasets are ordered by the embedding time of HyperFuse. Numbers above the bars give the speed-up of HyperFuse over the fastest baseline on that dataset. OOT: SE-HSSL did not complete on DBLP within 24\,h}\label{fig:runtime}
\end{figure}

\begin{table}[t]
\caption{Summary over the eight datasets on which all methods completed (DBLP is reported in Table~\ref{tab:dblp}). Embedding time is given first; NMI and ARI are obtained with $k$-means, and accuracy and macro-F1 are reported per downstream classifier in Table~\ref{tab:clf}. The total time is summed over the eight datasets; speed-ups are those of HyperFuse over each baseline. Best in bold, second best underlined. Average ranks are computed per dataset (1 = best); the last column gives the $p$-value of the Friedman test}\label{tab:main}
\setlength{\tabcolsep}{3pt}\small
\begin{tabular}{@{}lcccccc@{}}
\toprule
 & HyperFuse & TriCL & SE-HSSL & VilLain & HypeBoy & \\
\midrule
\multicolumn{7}{@{}l}{\emph{Embedding time}} \\
Mean time (s) $\downarrow$ & \textbf{8.7} & 6242.8 & 2516.8 & 1693.0 & \underline{1184.4} & \\
Geo.-mean time (s) $\downarrow$ & \textbf{6.4} & 912.3 & 611.5 & 1145.5 & \underline{86.2} & \\
Total time (min) $\downarrow$ & \textbf{1.2} & 832.4 & 335.6 & 225.7 & \underline{157.9} & \\
Speed-up (geo.\ mean) & -- & 143$\times$ & 96$\times$ & 179$\times$ & 13$\times$ & \\
Speed-up (range) & -- & 5.4--4520$\times$ & 9.2--1720$\times$ & 82--675$\times$ & 1.7--461$\times$ & \\
\midrule
\multicolumn{7}{@{}l}{\emph{Clustering quality (mean over datasets)}} \\
NMI (\%) & \textbf{60.22} & \underline{60.03} & 59.96 & 48.38 & 52.82 & \\
ARI (\%) & \textbf{55.76} & 54.17 & \underline{54.58} & 39.21 & 39.65 & \\
\midrule
\multicolumn{6}{@{}l}{\emph{Average rank} $\downarrow$} & Friedman $p$ \\
NMI & 2.50 & \textbf{2.12} & 2.75 & 3.62 & 4.00 & 0.095 \\
ARI & 2.38 & \textbf{2.12} & 2.38 & 4.00 & 4.12 & 0.016 \\
\botrule
\end{tabular}
\end{table}

\subsection{Embedding quality}\label{sec:quality}

Table~\ref{tab:clf} reports accuracy and macro-F1 for each downstream classifier, averaged over the eight common datasets, with the raw node features as a reference. HyperFuse had the highest average accuracy with five of the six classifiers, all except the MLP, and the highest macro-F1 with the four hypergraph-aware classifiers; with LogReg and the MLP, TriCL had the highest macro-F1. With every classifier, HyperFuse, TriCL and SE-HSSL differed by at most 1.2 points in accuracy and 2.6 points in macro-F1, whereas HypeBoy was 2.1--4.1 points and VilLain, which does not use node features, 7.6--17.0 points less accurate than HyperFuse. HyperFuse also improved on the raw features with every classifier, most with LogReg and the MLP, which see the hypergraph only through the embedding. Appendix~\ref{app:ablation} reports an ablation of the encoder's side input $[\mat{F}\mid\mat{G}]$, evaluated with logistic regression on the embeddings, and Appendix~\ref{app:featdeg} compares HyperFuse with its encoder alone when the node features are degraded.

\begin{table}[t]
\caption{Accuracy and macro-F1 (\%) per downstream classifier, averaged over the eight common datasets and three seeds. Raw: the node features used in place of an embedding (reference; for the last four classifiers this is their standard semi-supervised use). LogReg and MLP ignore the hypergraph; the other four classifiers also use it. Best in bold, second best underlined, among the five embedding methods. HF: HyperFuse, TC: TriCL, SH: SE-HSSL, VL: VilLain, HB: HypeBoy}\label{tab:clf}
\setlength{\tabcolsep}{2.6pt}\footnotesize
\begin{tabular}{@{}lcccccccccccc@{}}
\toprule
 & \multicolumn{6}{c}{Accuracy} & \multicolumn{6}{c}{Macro-F1} \\
\cmidrule(lr){2-7}\cmidrule(l){8-13}
Classifier & Raw & HF & TC & SH & VL & HB & Raw & HF & TC & SH & VL & HB \\
\midrule
LogReg & 70.63 & \textbf{80.37} & \underline{80.08} & 79.18 & 64.69 & 77.88 & 63.29 & \underline{73.19} & \textbf{74.08} & 73.03 & 58.59 & 71.41 \\
MLP & 64.46 & 78.61 & \textbf{79.49} & \underline{79.22} & 61.57 & 74.54 & 57.40 & 70.47 & \textbf{73.06} & \underline{72.52} & 55.47 & 66.39 \\
HGNN & 77.26 & \textbf{79.01} & \underline{78.49} & 78.21 & 68.22 & 75.32 & 69.31 & \textbf{70.90} & \underline{70.68} & 70.22 & 59.08 & 65.71 \\
HNHN & 73.45 & \textbf{78.85} & \underline{78.59} & 78.31 & 70.97 & 75.91 & 65.70 & \textbf{71.12} & \underline{70.61} & 70.60 & 61.99 & 67.92 \\
UniGCNII & 78.98 & \textbf{79.84} & \underline{79.08} & 78.73 & 71.53 & 76.19 & 71.93 & \textbf{72.26} & \underline{71.42} & 71.34 & 62.38 & 68.28 \\
AllSetTr. & 76.73 & \textbf{80.00} & \underline{79.77} & 79.33 & 72.40 & 77.89 & 69.10 & \textbf{72.70} & \underline{72.54} & 72.23 & 64.22 & 70.06 \\
\botrule
\end{tabular}
\end{table}

For clustering (Table~\ref{tab:main}), HyperFuse attained an NMI of 60.22\% and an ARI of 55.76\%, compared with 60.03\% and 54.17\% for TriCL, 59.96\% and 54.58\% for SE-HSSL, 48.38\% and 39.21\% for VilLain, and 52.82\% and 39.65\% for HypeBoy. No method dominated across datasets (Tables~\ref{tab:app_nmi} and~\ref{tab:app_ari}): HyperFuse had the highest NMI and ARI on 20News, Cora-CC and NTU2012, and TriCL on most of the other datasets. HyperFuse was clearly behind on some clustering tasks: its ARI on ModelNet40 was 87.10\%, against 94.33\% for TriCL and 94.52\% for SE-HSSL, and its NMI on Zoo was 86.18\%, against 91.56\% for TriCL. Conversely, its ARI on 20News was 47.42\%, against 20.43\% for TriCL and 39.98\% for SE-HSSL. VilLain, which does not use node features, had the best NMI and ARI on PubMed but was otherwise less accurate than the feature-based methods. Figure~\ref{fig:qt} places these results against the geometric-mean embedding time.

\begin{figure}[t]
\centering
\includegraphics[width=\textwidth]{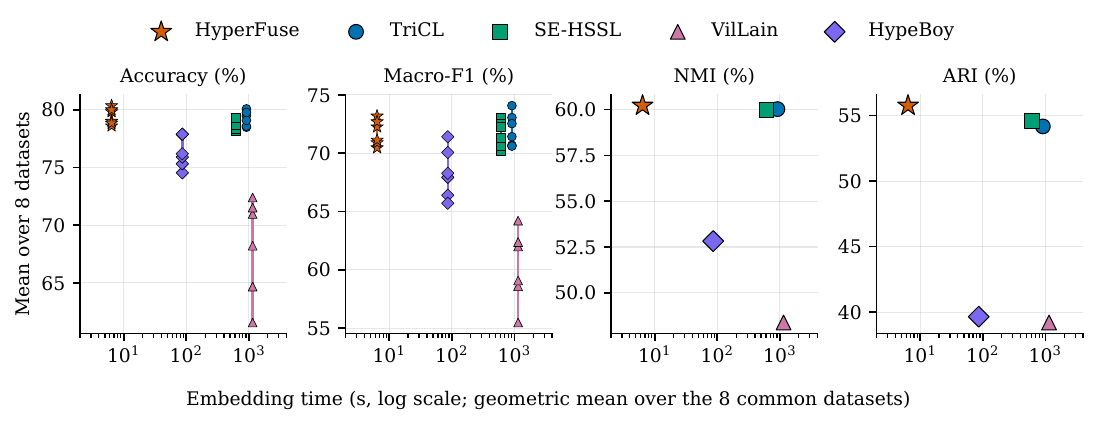}
\caption{Embedding quality against geometric-mean embedding time over the eight common datasets. For accuracy and macro-F1, each marker is one downstream classifier (score averaged over the datasets), and the vertical line spans the six classifiers. HyperFuse attains quality comparable to TriCL and SE-HSSL at about two orders of magnitude lower embedding time, and higher quality than HypeBoy, the fastest baseline, at about one order of magnitude lower embedding time}\label{fig:qt}
\end{figure}

\paragraph{Statistical comparison.}
With every classifier, the Friedman test indicates differences among the five methods in accuracy and macro-F1 (all $p\le0.006$; Table~\ref{tab:ranks}); for clustering, it indicates differences in ARI ($p=0.016$) but not in NMI ($p=0.095$). These differences are due to VilLain and HypeBoy. HyperFuse was more accurate than VilLain with every classifier (better on at least seven of the eight datasets, $p\le0.016$) and more accurate than HypeBoy with every classifier (better on at least six of the eight datasets, $p\le0.039$; for macro-F1, $p\le0.016$ with the four hypergraph-aware classifiers and $p=0.15$ with LogReg and the MLP). Of its 24 comparisons with TriCL and SE-HSSL (six classifiers, two scores), only one had $p<0.05$ (accuracy with UniGCNII against SE-HSSL, $p=0.039$, in favour of HyperFuse). For clustering, HyperFuse had higher NMI and ARI than HypeBoy on seven of the eight datasets ($p=0.023$ for both), whereas none of its clustering comparisons with TriCL, SE-HSSL and VilLain was significant at the 0.05 level (all $p\ge0.055$). We conclude that, on these benchmarks, HyperFuse matches the classification and clustering quality of TriCL and SE-HSSL and exceeds that of HypeBoy and VilLain; the data do not support a claim that it is more accurate than TriCL or SE-HSSL.

\begin{table}[t]
\caption{Average ranks (1 = best) of the five methods over the eight common datasets for each downstream classifier, with the $p$-value of the Friedman test. Best (lowest) rank in bold. HF: HyperFuse, TC: TriCL, SH: SE-HSSL, VL: VilLain, HB: HypeBoy}\label{tab:ranks}
\setlength{\tabcolsep}{2.8pt}\footnotesize
\begin{tabular}{@{}lcccccccccccc@{}}
\toprule
 & \multicolumn{6}{c}{Accuracy} & \multicolumn{6}{c}{Macro-F1} \\
\cmidrule(lr){2-7}\cmidrule(l){8-13}
Classifier & HF & TC & SH & VL & HB & $p$ & HF & TC & SH & VL & HB & $p$ \\
\midrule
LogReg & \textbf{1.75} & 2.00 & 3.12 & 4.75 & 3.38 & $<$0.001 & \textbf{2.00} & \textbf{2.00} & 3.00 & 4.75 & 3.25 & 0.003 \\
MLP & 2.50 & \textbf{1.75} & 2.25 & 4.88 & 3.62 & $<$0.001 & 2.88 & \textbf{1.75} & 2.00 & 4.88 & 3.50 & $<$0.001 \\
HGNN & 2.25 & \textbf{1.62} & 2.12 & 4.75 & 4.25 & $<$0.001 & 2.50 & \textbf{1.50} & 2.12 & 4.75 & 4.12 & $<$0.001 \\
HNHN & 1.88 & \textbf{1.75} & 2.62 & 4.62 & 4.12 & $<$0.001 & 2.25 & \textbf{1.88} & 2.25 & 4.62 & 4.00 & $<$0.001 \\
UniGCNII & \textbf{1.50} & 1.88 & 2.62 & 4.75 & 4.25 & $<$0.001 & \textbf{1.62} & 2.00 & 2.38 & 4.88 & 4.12 & $<$0.001 \\
AllSetTr. & 2.00 & \textbf{1.88} & 3.12 & 4.38 & 3.62 & 0.006 & 2.25 & \textbf{1.75} & 2.50 & 4.62 & 3.88 & $<$0.001 \\
\botrule
\end{tabular}
\end{table}
\subsection{Results on DBLP}\label{sec:dblp}

Table~\ref{tab:dblp} reports DBLP separately because SE-HSSL did not complete within 24\,h on the NVIDIA T1000 (8\,GB) GPU in any of the three seeds (OOT). HyperFuse generated the DBLP embedding in 110.4\,s, compared with 6.2\,h for TriCL (202$\times$), 3.4\,h for VilLain (110$\times$) and 1.5\,h for HypeBoy (50$\times$). With every downstream classifier, its accuracy was within 3.7 points of the most accurate method (0.6--3.7 points below TriCL and 0.3--2.0 points below HypeBoy), and it was 8.6--14.3 points more accurate than VilLain.

\begin{table}[t]
\caption{Results on DBLP (mean $\pm$ standard deviation over three seeds). SE-HSSL did not complete within 24\,h on the NVIDIA T1000 (8\,GB) GPU in any seed. Best in bold, second best underlined}\label{tab:dblp}
\setlength{\tabcolsep}{3pt}\footnotesize
\begin{tabular}{@{}llcccc@{}}
\toprule
 & & HyperFuse & TriCL & VilLain & HypeBoy \\
\midrule
Time (s) $\downarrow$ &  & \textbf{110.4{\scriptsize$\pm$20.9}} & 22322.3{\scriptsize$\pm$3053.9} & 12170.2{\scriptsize$\pm$193.0} & \underline{5572.0{\scriptsize$\pm$99.7}} \\
\midrule
\multirow{6}{*}{Accuracy (\%)} & LogReg & 89.39{\scriptsize$\pm$0.06} & \underline{89.99{\scriptsize$\pm$0.12}} & 75.08{\scriptsize$\pm$0.16} & \textbf{90.27{\scriptsize$\pm$0.02}} \\
 & MLP & 88.88{\scriptsize$\pm$0.15} & \textbf{91.10{\scriptsize$\pm$0.06}} & 78.23{\scriptsize$\pm$0.65} & \underline{89.57{\scriptsize$\pm$0.20}} \\
 & HGNN & 86.80{\scriptsize$\pm$0.16} & \textbf{90.50{\scriptsize$\pm$0.02}} & 77.27{\scriptsize$\pm$0.12} & \underline{88.78{\scriptsize$\pm$0.01}} \\
 & HNHN & 89.23{\scriptsize$\pm$0.18} & \textbf{91.22{\scriptsize$\pm$0.05}} & 79.06{\scriptsize$\pm$0.16} & \underline{89.61{\scriptsize$\pm$0.09}} \\
 & UniGCNII & 88.26{\scriptsize$\pm$0.09} & \textbf{91.02{\scriptsize$\pm$0.07}} & 79.63{\scriptsize$\pm$0.08} & \underline{89.34{\scriptsize$\pm$0.04}} \\
 & AllSetTr. & 89.77{\scriptsize$\pm$0.07} & \textbf{91.31{\scriptsize$\pm$0.05}} & 79.49{\scriptsize$\pm$0.15} & \underline{90.02{\scriptsize$\pm$0.07}} \\
\midrule
\multirow{6}{*}{Macro-F1 (\%)} & LogReg & 88.86{\scriptsize$\pm$0.11} & \underline{89.53{\scriptsize$\pm$0.13}} & 72.93{\scriptsize$\pm$0.20} & \textbf{89.89{\scriptsize$\pm$0.01}} \\
 & MLP & 88.25{\scriptsize$\pm$0.21} & \textbf{90.74{\scriptsize$\pm$0.04}} & 76.62{\scriptsize$\pm$0.78} & \underline{89.15{\scriptsize$\pm$0.25}} \\
 & HGNN & 85.93{\scriptsize$\pm$0.14} & \textbf{90.18{\scriptsize$\pm$0.02}} & 75.45{\scriptsize$\pm$0.17} & \underline{88.36{\scriptsize$\pm$0.03}} \\
 & HNHN & 88.60{\scriptsize$\pm$0.21} & \textbf{90.88{\scriptsize$\pm$0.05}} & 77.60{\scriptsize$\pm$0.12} & \underline{89.22{\scriptsize$\pm$0.11}} \\
 & UniGCNII & 87.57{\scriptsize$\pm$0.09} & \textbf{90.72{\scriptsize$\pm$0.09}} & 78.20{\scriptsize$\pm$0.14} & \underline{88.93{\scriptsize$\pm$0.03}} \\
 & AllSetTr. & 89.27{\scriptsize$\pm$0.08} & \textbf{90.98{\scriptsize$\pm$0.05}} & 78.04{\scriptsize$\pm$0.10} & \underline{89.66{\scriptsize$\pm$0.06}} \\
\midrule
NMI (\%) &  & 55.21{\scriptsize$\pm$0.89} & \textbf{62.33{\scriptsize$\pm$1.45}} & 37.99{\scriptsize$\pm$0.50} & \underline{59.42{\scriptsize$\pm$1.33}} \\
ARI (\%) &  & 42.52{\scriptsize$\pm$1.52} & \underline{48.88{\scriptsize$\pm$2.33}} & 30.14{\scriptsize$\pm$0.70} & \textbf{49.34{\scriptsize$\pm$1.23}} \\
\botrule
\end{tabular}
\end{table}
\section{Discussion and limitations}\label{sec:discussion}

The main finding is that, on the eight benchmark hypergraphs on which all methods completed, the embeddings of HyperFuse supported downstream classification and clustering about as well as those of TriCL and SE-HSSL, while being generated two orders of magnitude faster, and better than those of HypeBoy, the fastest baseline, while being generated 13$\times$ faster on geometric mean. Two properties account for the speed. First, Stages~A and B use only linear algebra whose cost scales with the number of incidences rather than with the number of node pairs, and they never materialise the hypergraph adjacency. Second, Stage~C trains a small encoder for 100 epochs with an objective whose cost is $O(Nd^2)$ rather than $O(N^2d)$. A further practical property is that HyperFuse used one configuration for all datasets without per-dataset tuning, whereas TriCL and SE-HSSL used per-dataset configurations provided by their authors. Stages~A and~B matter most when node features are weak: with 5\% of the feature columns, HyperFuse was on average 5.2 points more accurate than its encoder alone with logistic regression, with the gains concentrated on five of the eight datasets (Appendix~\ref{app:featdeg}).

The time saving matters most when many embeddings are needed. As an illustration only, with the measured PubMed times and ignoring parallelism, embedding a collection of 100 hypergraphs of this size would take about 12\,min with HyperFuse, compared with about 38 days with TriCL, 14 days with SE-HSSL, 6 days with VilLain and 1.7\,h with HypeBoy. This arithmetic assumes that all members of the collection resemble PubMed and that times add up; it is not a measurement.

\paragraph{Towards evolving hypergraphs.} Evolving hypergraphs motivated this work, and although we have not evaluated HyperFuse on them, its design lends itself to that setting. At a cost of seconds per embedding, re-embedding each snapshot from scratch is already practical for hypergraphs of the sizes studied here. Beyond that, Stage~A is a subspace iteration that can be initialised with the previous snapshot's $\mat{S}$ instead of a random basis; the Stage-C encoder is a parametric message-passing network whose weights can be carried over and fine-tuned on the next snapshot; and the Stage-C operators are rebuilt from the incidence structure alone. Two parts would need adaptation: the spectral summaries $\mat{F}$ and $\mat{G}$, whose bases can change between snapshots, and the rotational invariance of the embeddings, which requires aligning successive snapshots.

The study has the following limitations:

\emph{Scope of the evaluation.} We did not evaluate HyperFuse on collections of patient-specific networks or on evolving hypergraphs, which motivated this work; our evidence consists of per-hypergraph embedding times and downstream quality on standard benchmarks. HyperFuse has no incremental update: a new snapshot is embedded from scratch. Embeddings of different hypergraphs, or of different snapshots of one hypergraph, are computed independently and are defined only up to the invariances of the objective, so comparisons across graphs require an alignment step. Apart from the artificial feature degradation of Appendix~\ref{app:featdeg}, all results are on hypergraphs with informative node features; on the citation hypergraphs, the structural stages did not help when the features were weak. We evaluated node classification and clustering only, not hyperedge prediction or inductive settings.

\emph{Comparability of runtimes.} Runtimes were measured on one workstation with an 8\,GB consumer GPU, and the baselines ran with their official epoch budgets and embedding widths, which for TriCL and SE-HSSL differ from those of HyperFuse (128 dimensions); a baseline trained for fewer epochs would be faster, possibly at lower quality. Stages~A and B of HyperFuse ran on the CPU while all other training ran on the GPU. Absolute times will differ on other hardware, and the time-out of SE-HSSL on DBLP applies to the hardware used here.

\emph{Memory.} Stage~B stores dense $N\times D$ and $N\times3D$ feature arrays, so its memory grows with the product of the number of nodes and the feature dimension. This did not limit the present benchmarks but would matter for very high-dimensional features.

\emph{Statistical power.} Eight datasets, three seeds and three of TriCL's 20 splits provide limited power to detect differences of the size observed between HyperFuse, TriCL and SE-HSSL; our conclusion is that no difference was detected, not that the methods are equivalent.

\section{Conclusion}\label{sec:conclusion}

HyperFuse generates self-supervised node embeddings for attributed hypergraphs by combining matrix-free modularity maximisation on Banerjee's hypergraph adjacency, corruption-consistent feature summaries with bounded hyperedge utility weights, and a short correlation-based training stage. On eight benchmark hypergraphs, it generated embeddings 13--179$\times$ faster (geometric mean) than four self-supervised baselines, with classification and clustering quality not significantly different from those of the two strongest and higher than that of the fastest baseline, HypeBoy. Because a new embedding costs seconds rather than hours, HyperFuse is suited to analyses that require many embeddings, such as collections of patient-specific networks, and it is a natural starting point for embedding evolving hypergraphs. Evaluating it in those settings is the next step.

\backmatter


\section*{Declarations}

\begin{itemize}
\item \textbf{Funding:} The work of the author Anirban Banerjee
is supported by the Department of Biotechnology, India (No. BT/PR40182/BTIS/137/88/2025).
\item \textbf{Competing interests:} The authors declare no competing interests.
\item \textbf{Ethics approval and consent to participate:} Not applicable.
\item \textbf{Consent for publication:} Not applicable.
\item \textbf{Data availability:} All datasets are publicly available in the preprocessed form released with TriCL \citep{lee2023tricl}.
\item \textbf{Code availability:} The code of HyperFuse and of the benchmark is available at \url{https://github.com/ideas-are-bulletproof/HyperFuse_benchmark}.
\end{itemize}

\bibliography{hyperfuse_refs}

\begin{appendices}

\section{Configurations}\label{app:config}

\begin{table}[htbp!]
\caption{Default configuration of HyperFuse, used unchanged on all datasets ($d=128$)}\label{tab:hfconfig}
\begin{tabular}{@{}lp{5.2cm}p{5.6cm}@{}}
\toprule
Stage & Setting & Value \\
\midrule
A & structural width $d_s$ & $d/4=32$ \\
  & step-size factor $c$; maximum iterations & $0.9$; $1{,}000$ \\
  & stopping rule & rel.\ change $<10^{-5}$, checked every 5 iterations, patience 5, min.\ 20 \\
\midrule
B & feature and incidence widths $d_f$, $d_g$ & $64$, $32$ \\
  & residualisation $\beta$; masking rate $\rho$ & $0.5$; $0.10$ \\
  & laziness $\alpha$; hops $K$ & $0.5$; $3$ \\
  & scale-weight temperature $\tau$; floor $\omega$ & $0.25$; $0.05$ \\
  & ridge: steps, step size, penalty & $160$, $0.03$, $0.05$ \\
  & utility temperature; bounds $[u_{\min},u_{\max}]$ & $0.8$; $[0.65,1.35]$ \\
\midrule
C & hidden width $h$; output width $d$ & $512$; $128$ \\
  & epochs; optimiser; learning rate; weight decay & $100$; Adam; $10^{-3}$; $0$ \\
  & decorrelation weight $\lambda$ & $10^{-3}$ \\
  & feature dropout $p_f$ & $0.3$ (sparse/discrete), $0$ (dense real-valued) \\
  & membership dropout $p_m$ & $0.3$ \\
  & output propagation: steps $K'$, laziness $\alpha'$ & $2$, $0.5$ \\
\botrule
\end{tabular}
\end{table}

\begin{table}[htbp!]
\caption{Baseline settings taken from the official configurations: number of training epochs and embedding width per dataset. VilLain trains seven models (2--8 virtual labels), each for up to 5{,}000 epochs with early stopping; VilLain and HypeBoy output 128 dimensions on all datasets; HypeBoy uses its repository's default configuration on all datasets}\label{tab:baselinecfg}
\begin{tabular}{@{}lcccc@{}}
\toprule
 & \multicolumn{2}{c}{TriCL} & \multicolumn{2}{c}{SE-HSSL} \\
\cmidrule(lr){2-3}\cmidrule(l){4-5}
Dataset & Epochs & Width & Epochs & Width \\
\midrule
20News     & 500   & 256 & 50    & 512 \\
Citeseer   & 500   & 512 & 265   & 256 \\
Cora-CA    & 800   & 512 & 160   & 512 \\
Cora-CC    & 300   & 512 & 305   & 512 \\
DBLP       & 500   & 256 & 60    & 512 \\
ModelNet40 & 200   & 256 & 50    & 512 \\
NTU2012    & 200   & 512 & 60    & 512 \\
PubMed     & 1{,}000 & 512 & 1{,}000 & 512 \\
Zoo        & 100   & 128 & 50    & 1{,}024 \\
\botrule
\end{tabular}
\end{table}

\section{Dataset-wise results}\label{app:results}
\setcounter{table}{0}\setcounter{figure}{0}

All values are means $\pm$ standard deviations over three seeds. OOT: did not complete within 24\,h. Best in bold, second best underlined. Accuracy and macro-F1 for each downstream classifier are given in Appendix~\ref{app:clf}.

\begin{table}[htbp!]
\caption{Embedding time (s) per dataset, and speed-up of HyperFuse over each baseline}\label{tab:app_time}
\setlength{\tabcolsep}{3pt}\footnotesize
\begin{tabular}{@{}lccccc@{}}
\toprule
Dataset & HyperFuse & TriCL & SE-HSSL & VilLain & HypeBoy \\
\midrule
20News & \textbf{17.7{\scriptsize$\pm$2.9}} & 10010.4{\scriptsize$\pm$14.4} & 2751.3{\scriptsize$\pm$0.3} & \underline{2021.5{\scriptsize$\pm$4.1}} & 8157.0{\scriptsize$\pm$1105.0} \\
Citeseer & \textbf{7.8{\scriptsize$\pm$1.8}} & 253.3{\scriptsize$\pm$0.1} & 71.8{\scriptsize$\pm$0.2} & 634.8{\scriptsize$\pm$21.5} & \underline{64.1{\scriptsize$\pm$2.7}} \\
Cora-CA & \textbf{6.6{\scriptsize$\pm$0.2}} & 1182.6{\scriptsize$\pm$30.6} & 1380.0{\scriptsize$\pm$1.2} & 624.5{\scriptsize$\pm$2.1} & \underline{27.2{\scriptsize$\pm$0.4}} \\
Cora-CC & \textbf{5.2{\scriptsize$\pm$0.1}} & 205.3{\scriptsize$\pm$2.1} & 1114.5{\scriptsize$\pm$0.8} & 1228.9{\scriptsize$\pm$89.9} & \underline{19.7{\scriptsize$\pm$1.4}} \\
DBLP & \textbf{110.4{\scriptsize$\pm$20.9}} & 22322.3{\scriptsize$\pm$3053.9} & OOT & 12170.2{\scriptsize$\pm$193.0} & \underline{5572.0{\scriptsize$\pm$99.7}} \\
ModelNet40 & \textbf{17.2{\scriptsize$\pm$0.7}} & 3872.1{\scriptsize$\pm$102.1} & 2107.1{\scriptsize$\pm$20.9} & 2975.7{\scriptsize$\pm$62.7} & \underline{1120.7{\scriptsize$\pm$111.6}} \\
NTU2012 & \textbf{6.8{\scriptsize$\pm$1.0}} & 1624.6{\scriptsize$\pm$361.4} & 222.4{\scriptsize$\pm$1.8} & 941.3{\scriptsize$\pm$24.3} & \underline{11.5{\scriptsize$\pm$0.1}} \\
PubMed & \textbf{7.3{\scriptsize$\pm$0.1}} & 32790.7{\scriptsize$\pm$394.5} & 12476.2{\scriptsize$\pm$0.7} & 4897.6{\scriptsize$\pm$118.7} & \underline{61.3{\scriptsize$\pm$0.8}} \\
Zoo & \textbf{0.7{\scriptsize$\pm$0.1}} & \underline{3.8{\scriptsize$\pm$0.1}} & 11.0{\scriptsize$\pm$0.1} & 219.4{\scriptsize$\pm$0.7} & 13.7{\scriptsize$\pm$0.0} \\
\midrule
 & \multicolumn{5}{c}{Speed-up of HyperFuse} \\
\cmidrule(l){2-6}
Dataset & -- & TriCL & SE-HSSL & VilLain & HypeBoy \\
\midrule
20News & -- & 566$\times$ & 155$\times$ & 114$\times$ & 461$\times$ \\
Citeseer & -- & 33$\times$ & 9.2$\times$ & 82$\times$ & 8.2$\times$ \\
Cora-CA & -- & 180$\times$ & 210$\times$ & 95$\times$ & 4.1$\times$ \\
Cora-CC & -- & 39$\times$ & 214$\times$ & 236$\times$ & 3.8$\times$ \\
DBLP & -- & 202$\times$ & OOT & 110$\times$ & 50$\times$ \\
ModelNet40 & -- & 225$\times$ & 123$\times$ & 173$\times$ & 65$\times$ \\
NTU2012 & -- & 238$\times$ & 33$\times$ & 138$\times$ & 1.7$\times$ \\
PubMed & -- & 4520$\times$ & 1720$\times$ & 675$\times$ & 8.4$\times$ \\
Zoo & -- & 5.4$\times$ & 16$\times$ & 314$\times$ & 20$\times$ \\
\botrule
\end{tabular}
\end{table}

\begin{table}[htbp!]
\caption{NMI (\%) per dataset ($k$-means)}\label{tab:app_nmi}
\setlength{\tabcolsep}{3pt}\footnotesize
\begin{tabular}{@{}lccccc@{}}
\toprule
Dataset & HyperFuse & TriCL & SE-HSSL & VilLain & HypeBoy \\
\midrule
20News & \textbf{43.59{\scriptsize$\pm$0.70}} & 31.36{\scriptsize$\pm$3.68} & 38.17{\scriptsize$\pm$0.22} & \underline{38.91{\scriptsize$\pm$0.12}} & 31.61{\scriptsize$\pm$1.48} \\
Citeseer & 42.19{\scriptsize$\pm$1.16} & \textbf{44.12{\scriptsize$\pm$0.23}} & \underline{42.80{\scriptsize$\pm$2.27}} & 24.93{\scriptsize$\pm$0.31} & 39.01{\scriptsize$\pm$0.51} \\
Cora-CA & \underline{49.04{\scriptsize$\pm$3.02}} & \textbf{50.03{\scriptsize$\pm$1.28}} & 46.53{\scriptsize$\pm$0.67} & 9.43{\scriptsize$\pm$0.87} & 44.76{\scriptsize$\pm$0.80} \\
Cora-CC & \textbf{55.68{\scriptsize$\pm$0.66}} & \underline{55.20{\scriptsize$\pm$0.87}} & 54.35{\scriptsize$\pm$1.66} & 36.56{\scriptsize$\pm$1.55} & 42.68{\scriptsize$\pm$0.47} \\
DBLP & 55.21{\scriptsize$\pm$0.89} & \textbf{62.33{\scriptsize$\pm$1.45}} & OOT & 37.99{\scriptsize$\pm$0.50} & \underline{59.42{\scriptsize$\pm$1.33}} \\
ModelNet40 & 93.75{\scriptsize$\pm$0.27} & \textbf{95.54{\scriptsize$\pm$0.15}} & \underline{95.51{\scriptsize$\pm$0.27}} & 75.23{\scriptsize$\pm$0.28} & 79.32{\scriptsize$\pm$0.17} \\
NTU2012 & \textbf{83.69{\scriptsize$\pm$0.09}} & \underline{83.09{\scriptsize$\pm$0.11}} & 82.80{\scriptsize$\pm$0.08} & 78.96{\scriptsize$\pm$0.18} & 72.51{\scriptsize$\pm$0.21} \\
PubMed & 27.65{\scriptsize$\pm$2.28} & 29.33{\scriptsize$\pm$0.46} & 29.67{\scriptsize$\pm$3.24} & \textbf{31.87{\scriptsize$\pm$0.63}} & \underline{31.05{\scriptsize$\pm$0.64}} \\
Zoo & 86.18{\scriptsize$\pm$1.21} & \textbf{91.56{\scriptsize$\pm$1.07}} & 89.88{\scriptsize$\pm$0.76} & \underline{91.14{\scriptsize$\pm$0.03}} & 81.64{\scriptsize$\pm$0.71} \\
\botrule
\end{tabular}
\end{table}

\begin{table}[htbp!]
\caption{ARI (\%) per dataset ($k$-means)}\label{tab:app_ari}
\setlength{\tabcolsep}{3pt}\footnotesize
\begin{tabular}{@{}lccccc@{}}
\toprule
Dataset & HyperFuse & TriCL & SE-HSSL & VilLain & HypeBoy \\
\midrule
20News & \textbf{47.42{\scriptsize$\pm$1.45}} & 20.43{\scriptsize$\pm$5.46} & \underline{39.98{\scriptsize$\pm$0.52}} & 38.86{\scriptsize$\pm$0.12} & 15.98{\scriptsize$\pm$2.18} \\
Citeseer & 41.66{\scriptsize$\pm$0.64} & \textbf{45.02{\scriptsize$\pm$0.31}} & \underline{42.02{\scriptsize$\pm$1.74}} & 17.78{\scriptsize$\pm$0.70} & 38.63{\scriptsize$\pm$1.51} \\
Cora-CA & \underline{44.30{\scriptsize$\pm$5.56}} & \textbf{45.96{\scriptsize$\pm$3.13}} & 38.52{\scriptsize$\pm$3.29} & 2.29{\scriptsize$\pm$0.76} & 37.15{\scriptsize$\pm$2.63} \\
Cora-CC & \textbf{50.05{\scriptsize$\pm$0.67}} & \underline{49.04{\scriptsize$\pm$0.12}} & 45.55{\scriptsize$\pm$3.60} & 23.99{\scriptsize$\pm$2.26} & 27.75{\scriptsize$\pm$1.86} \\
DBLP & 42.52{\scriptsize$\pm$1.52} & \underline{48.88{\scriptsize$\pm$2.33}} & OOT & 30.14{\scriptsize$\pm$0.70} & \textbf{49.34{\scriptsize$\pm$1.23}} \\
ModelNet40 & 87.10{\scriptsize$\pm$0.78} & \underline{94.33{\scriptsize$\pm$0.86}} & \textbf{94.52{\scriptsize$\pm$0.77}} & 55.61{\scriptsize$\pm$0.65} & 56.59{\scriptsize$\pm$0.79} \\
NTU2012 & \textbf{68.53{\scriptsize$\pm$0.99}} & \underline{67.06{\scriptsize$\pm$0.48}} & 65.92{\scriptsize$\pm$0.59} & 55.23{\scriptsize$\pm$0.76} & 38.74{\scriptsize$\pm$0.63} \\
PubMed & 22.67{\scriptsize$\pm$0.48} & 24.22{\scriptsize$\pm$0.41} & 25.27{\scriptsize$\pm$4.70} & \textbf{37.15{\scriptsize$\pm$0.56}} & \underline{28.43{\scriptsize$\pm$1.28}} \\
Zoo & 84.33{\scriptsize$\pm$0.69} & \textbf{87.31{\scriptsize$\pm$0.58}} & \underline{84.85{\scriptsize$\pm$3.09}} & 82.76{\scriptsize$\pm$0.53} & 73.92{\scriptsize$\pm$4.37} \\
\botrule
\end{tabular}
\end{table}

\begin{figure}[htbp!]
\centering
\includegraphics[width=\textwidth]{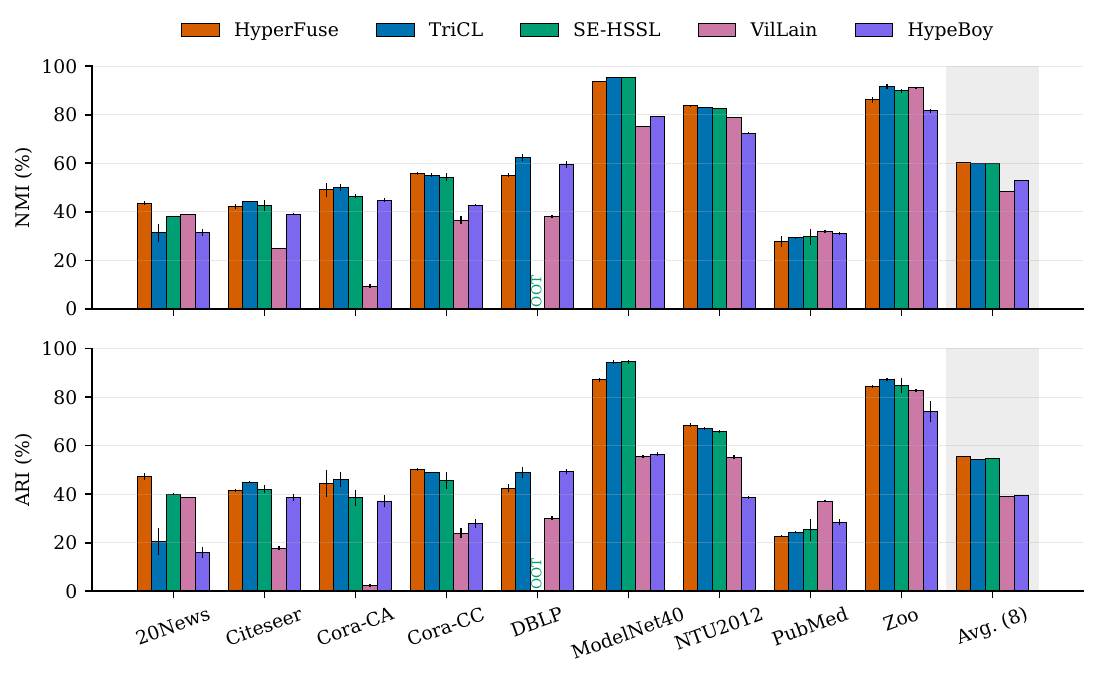}
\caption{Per-dataset NMI and ARI ($k$-means; mean over three seeds, error bars show one standard deviation). Avg.~(8): mean over the eight datasets other than DBLP}\label{fig:perdataset}
\end{figure}

\clearpage
\section{Results per downstream classifier}\label{app:clf}
\setcounter{table}{0}\setcounter{figure}{0}

Mean $\pm$ standard deviation over three seeds of the test score averaged over three splits. AllSetTr.: AllSetTransformer. Raw: the raw node features used in place of an embedding, evaluated with the same classifiers, splits and seeds; for the four hypergraph-aware classifiers this is their standard semi-supervised use. Raw is a reference and is not included in the bold (best) and underlined (second-best) marks, which compare the five embedding methods. LogReg is deterministic given its input, so its standard deviation over seeds is zero for Raw.

\begin{table}[htbp!]
\caption{Accuracy (\%) per dataset and classifier (part 1)}\label{tab:app_clf_acc1}
\setlength{\tabcolsep}{1.6pt}\scriptsize
\begin{tabular}{@{}llcccccc@{}}
\toprule
Dataset & Classifier & Raw & HyperFuse & TriCL & SE-HSSL & VilLain & HypeBoy \\
\midrule
\multirow{6}{*}{20News} & LogReg & 78.65{\scriptsize$\pm$0.00} & \underline{79.71{\tiny$\pm$0.17}} & 77.22{\tiny$\pm$0.04} & 76.31{\tiny$\pm$0.27} & 79.20{\tiny$\pm$0.07} & \textbf{79.84{\tiny$\pm$0.09}} \\
 & MLP & 79.10{\scriptsize$\pm$0.18} & 79.48{\tiny$\pm$0.25} & \underline{79.55{\tiny$\pm$0.20}} & 78.76{\tiny$\pm$0.12} & 78.38{\tiny$\pm$0.33} & \textbf{79.80{\tiny$\pm$0.02}} \\
 & HGNN & 80.00{\scriptsize$\pm$0.08} & 79.12{\tiny$\pm$0.05} & \textbf{79.83{\tiny$\pm$0.11}} & \underline{79.62{\tiny$\pm$0.13}} & 74.82{\tiny$\pm$0.09} & 77.76{\tiny$\pm$0.37} \\
 & HNHN & 79.60{\scriptsize$\pm$0.05} & \underline{79.78{\tiny$\pm$0.17}} & \textbf{80.07{\tiny$\pm$0.06}} & 79.65{\tiny$\pm$0.08} & 79.33{\tiny$\pm$0.28} & 79.74{\tiny$\pm$0.34} \\
 & UniGCNII & 80.10{\scriptsize$\pm$0.08} & 79.73{\tiny$\pm$0.05} & \textbf{80.19{\tiny$\pm$0.07}} & \underline{79.84{\tiny$\pm$0.06}} & 78.14{\tiny$\pm$0.08} & 78.64{\tiny$\pm$0.18} \\
 & AllSetTr. & 79.62{\scriptsize$\pm$0.21} & 79.94{\tiny$\pm$0.20} & \textbf{80.31{\tiny$\pm$0.15}} & 79.81{\tiny$\pm$0.36} & \underline{80.23{\tiny$\pm$0.08}} & 80.05{\tiny$\pm$0.18} \\
\midrule
\multirow{6}{*}{Citeseer} & LogReg & 55.91{\scriptsize$\pm$0.00} & \underline{70.23{\tiny$\pm$0.55}} & \textbf{70.34{\tiny$\pm$0.08}} & 69.02{\tiny$\pm$1.48} & 54.92{\tiny$\pm$0.29} & 68.63{\tiny$\pm$0.40} \\
 & MLP & 42.94{\scriptsize$\pm$2.88} & \textbf{71.42{\tiny$\pm$0.80}} & \underline{70.23{\tiny$\pm$0.72}} & 70.12{\tiny$\pm$1.60} & 51.71{\tiny$\pm$0.62} & 65.95{\tiny$\pm$6.88} \\
 & HGNN & 67.34{\scriptsize$\pm$0.67} & \textbf{71.29{\tiny$\pm$0.63}} & \underline{71.22{\tiny$\pm$0.09}} & 71.14{\tiny$\pm$0.34} & 55.03{\tiny$\pm$0.48} & 70.21{\tiny$\pm$0.35} \\
 & HNHN & 62.71{\scriptsize$\pm$0.29} & \textbf{71.19{\tiny$\pm$0.82}} & \underline{70.45{\tiny$\pm$1.25}} & 70.36{\tiny$\pm$0.81} & 55.83{\tiny$\pm$0.19} & 69.98{\tiny$\pm$0.49} \\
 & UniGCNII & 69.59{\scriptsize$\pm$0.38} & \textbf{71.55{\tiny$\pm$0.98}} & \underline{71.34{\tiny$\pm$0.71}} & 71.20{\tiny$\pm$0.88} & 55.86{\tiny$\pm$0.50} & 69.65{\tiny$\pm$1.03} \\
 & AllSetTr. & 66.36{\scriptsize$\pm$1.08} & \textbf{71.47{\tiny$\pm$0.44}} & \underline{70.85{\tiny$\pm$0.11}} & 70.69{\tiny$\pm$1.64} & 56.26{\tiny$\pm$1.05} & 68.81{\tiny$\pm$0.70} \\
\midrule
\multirow{6}{*}{Cora-CA} & LogReg & 58.96{\scriptsize$\pm$0.00} & \textbf{81.11{\tiny$\pm$0.41}} & \underline{80.95{\tiny$\pm$0.08}} & 80.00{\tiny$\pm$0.45} & 57.80{\tiny$\pm$0.42} & 77.73{\tiny$\pm$0.47} \\
 & MLP & 45.66{\scriptsize$\pm$0.84} & 78.31{\tiny$\pm$1.51} & \underline{78.64{\tiny$\pm$1.50}} & \textbf{79.23{\tiny$\pm$1.29}} & 52.11{\tiny$\pm$2.62} & 76.24{\tiny$\pm$0.48} \\
 & HGNN & 77.73{\scriptsize$\pm$0.31} & 79.70{\tiny$\pm$0.88} & \textbf{80.24{\tiny$\pm$0.24}} & \underline{79.84{\tiny$\pm$0.44}} & 57.29{\tiny$\pm$1.32} & 76.34{\tiny$\pm$0.36} \\
 & HNHN & 66.00{\scriptsize$\pm$0.96} & \underline{79.79{\tiny$\pm$0.67}} & \textbf{79.91{\tiny$\pm$0.62}} & 79.65{\tiny$\pm$0.21} & 58.79{\tiny$\pm$0.35} & 77.28{\tiny$\pm$0.33} \\
 & UniGCNII & 79.01{\scriptsize$\pm$0.47} & \textbf{80.99{\tiny$\pm$0.51}} & \underline{80.33{\tiny$\pm$0.15}} & 79.90{\tiny$\pm$0.27} & 59.27{\tiny$\pm$0.40} & 77.11{\tiny$\pm$0.45} \\
 & AllSetTr. & 74.88{\scriptsize$\pm$1.29} & \underline{80.89{\tiny$\pm$0.47}} & \textbf{81.13{\tiny$\pm$0.37}} & 80.68{\tiny$\pm$0.74} & 58.81{\tiny$\pm$0.95} & 77.80{\tiny$\pm$0.11} \\
\midrule
\multirow{6}{*}{Cora-CC} & LogReg & 56.59{\scriptsize$\pm$0.00} & 80.18{\tiny$\pm$0.36} & \textbf{80.93{\tiny$\pm$0.46}} & \underline{80.69{\tiny$\pm$0.64}} & 64.66{\tiny$\pm$0.93} & 78.47{\tiny$\pm$0.48} \\
 & MLP & 39.78{\scriptsize$\pm$10.37} & 73.30{\tiny$\pm$2.11} & \underline{77.61{\tiny$\pm$3.12}} & \textbf{79.36{\tiny$\pm$0.88}} & 56.56{\tiny$\pm$4.11} & 74.39{\tiny$\pm$2.66} \\
 & HGNN & 76.80{\scriptsize$\pm$0.73} & 79.18{\tiny$\pm$0.27} & \underline{80.11{\tiny$\pm$0.70}} & \textbf{80.21{\tiny$\pm$0.28}} & 73.64{\tiny$\pm$0.15} & 78.53{\tiny$\pm$0.26} \\
 & HNHN & 70.94{\scriptsize$\pm$0.54} & 80.41{\tiny$\pm$0.09} & \underline{80.54{\tiny$\pm$0.64}} & \textbf{80.81{\tiny$\pm$0.77}} & 73.90{\tiny$\pm$0.18} & 78.20{\tiny$\pm$0.27} \\
 & UniGCNII & 77.77{\scriptsize$\pm$0.18} & \textbf{80.95{\tiny$\pm$0.16}} & \underline{80.77{\tiny$\pm$0.22}} & 80.52{\tiny$\pm$0.41} & 74.54{\tiny$\pm$0.03} & 78.50{\tiny$\pm$0.57} \\
 & AllSetTr. & 73.75{\scriptsize$\pm$2.17} & \textbf{80.89{\tiny$\pm$0.10}} & 80.57{\tiny$\pm$0.10} & \underline{80.84{\tiny$\pm$0.60}} & 74.67{\tiny$\pm$0.49} & 78.35{\tiny$\pm$0.84} \\
\midrule
\multirow{6}{*}{DBLP} & LogReg & 78.54{\scriptsize$\pm$0.00} & 89.39{\tiny$\pm$0.06} & \underline{89.99{\tiny$\pm$0.12}} & OOT & 75.08{\tiny$\pm$0.16} & \textbf{90.27{\tiny$\pm$0.02}} \\
 & MLP & 80.76{\scriptsize$\pm$0.03} & 88.88{\tiny$\pm$0.15} & \textbf{91.10{\tiny$\pm$0.06}} & OOT & 78.23{\tiny$\pm$0.65} & \underline{89.57{\tiny$\pm$0.20}} \\
 & HGNN & 89.77{\scriptsize$\pm$0.04} & 86.80{\tiny$\pm$0.16} & \textbf{90.50{\tiny$\pm$0.02}} & OOT & 77.27{\tiny$\pm$0.12} & \underline{88.78{\tiny$\pm$0.01}} \\
 & HNHN & 84.61{\scriptsize$\pm$0.31} & 89.23{\tiny$\pm$0.18} & \textbf{91.22{\tiny$\pm$0.05}} & OOT & 79.06{\tiny$\pm$0.16} & \underline{89.61{\tiny$\pm$0.09}} \\
 & UniGCNII & 90.16{\scriptsize$\pm$0.06} & 88.26{\tiny$\pm$0.09} & \textbf{91.02{\tiny$\pm$0.07}} & OOT & 79.63{\tiny$\pm$0.08} & \underline{89.34{\tiny$\pm$0.04}} \\
 & AllSetTr. & 89.36{\scriptsize$\pm$0.42} & 89.77{\tiny$\pm$0.07} & \textbf{91.31{\tiny$\pm$0.05}} & OOT & 79.49{\tiny$\pm$0.15} & \underline{90.02{\tiny$\pm$0.07}} \\
\botrule
\end{tabular}
\end{table}

\begin{table}[htbp!]
\caption{Accuracy (\%) per dataset and classifier (part 2)}\label{tab:app_clf_acc2}
\setlength{\tabcolsep}{1.6pt}\scriptsize
\begin{tabular}{@{}llcccccc@{}}
\toprule
Dataset & Classifier & Raw & HyperFuse & TriCL & SE-HSSL & VilLain & HypeBoy \\
\midrule
\multirow{6}{*}{ModelNet40} & LogReg & 95.63{\scriptsize$\pm$0.00} & \textbf{97.18{\tiny$\pm$0.03}} & \underline{96.96{\tiny$\pm$0.08}} & 96.95{\tiny$\pm$0.05} & 86.54{\tiny$\pm$0.10} & 93.75{\tiny$\pm$0.06} \\
 & MLP & 94.53{\scriptsize$\pm$0.13} & 96.02{\tiny$\pm$0.23} & \underline{96.79{\tiny$\pm$0.19}} & \textbf{96.83{\tiny$\pm$0.22}} & 85.97{\tiny$\pm$0.49} & 91.83{\tiny$\pm$0.43} \\
 & HGNN & 94.06{\scriptsize$\pm$0.02} & \textbf{93.61{\tiny$\pm$0.10}} & \underline{92.56{\tiny$\pm$0.06}} & 92.46{\tiny$\pm$0.06} & 87.04{\tiny$\pm$0.16} & 88.93{\tiny$\pm$0.10} \\
 & HNHN & 96.18{\scriptsize$\pm$0.20} & \textbf{96.73{\tiny$\pm$0.16}} & \underline{94.99{\tiny$\pm$0.17}} & 94.63{\tiny$\pm$0.31} & 86.71{\tiny$\pm$0.06} & 90.90{\tiny$\pm$0.15} \\
 & UniGCNII & 96.59{\scriptsize$\pm$0.01} & \textbf{95.80{\tiny$\pm$0.05}} & \underline{93.76{\tiny$\pm$0.09}} & 93.71{\tiny$\pm$0.06} & 87.00{\tiny$\pm$0.03} & 90.00{\tiny$\pm$0.17} \\
 & AllSetTr. & 97.04{\scriptsize$\pm$0.18} & \textbf{97.15{\tiny$\pm$0.07}} & \underline{95.55{\tiny$\pm$0.13}} & 95.51{\tiny$\pm$0.11} & 87.05{\tiny$\pm$0.02} & 92.69{\tiny$\pm$0.31} \\
\midrule
\multirow{6}{*}{NTU2012} & LogReg & 70.97{\scriptsize$\pm$0.00} & \textbf{74.91{\tiny$\pm$0.36}} & \underline{74.66{\tiny$\pm$0.17}} & 74.05{\tiny$\pm$0.57} & 62.09{\tiny$\pm$0.48} & 69.22{\tiny$\pm$0.16} \\
 & MLP & 69.18{\scriptsize$\pm$0.26} & \textbf{72.88{\tiny$\pm$0.56}} & \underline{72.83{\tiny$\pm$0.38}} & 72.58{\tiny$\pm$0.19} & 61.15{\tiny$\pm$0.48} & 56.98{\tiny$\pm$0.77} \\
 & HGNN & 73.01{\scriptsize$\pm$0.35} & 72.35{\tiny$\pm$0.69} & \underline{73.12{\tiny$\pm$0.39}} & \textbf{73.23{\tiny$\pm$0.42}} & 69.48{\tiny$\pm$0.20} & 65.50{\tiny$\pm$0.75} \\
 & HNHN & 62.84{\scriptsize$\pm$0.47} & \underline{69.14{\tiny$\pm$0.28}} & \textbf{70.53{\tiny$\pm$0.38}} & 67.89{\tiny$\pm$1.17} & 68.07{\tiny$\pm$0.41} & 64.85{\tiny$\pm$1.41} \\
 & UniGCNII & 73.00{\scriptsize$\pm$0.66} & \textbf{73.57{\tiny$\pm$0.35}} & \underline{73.09{\tiny$\pm$0.43}} & 72.21{\tiny$\pm$0.29} & 69.74{\tiny$\pm$0.32} & 68.58{\tiny$\pm$0.52} \\
 & AllSetTr. & 72.32{\scriptsize$\pm$0.37} & \underline{73.25{\tiny$\pm$0.44}} & \textbf{73.99{\tiny$\pm$0.75}} & 73.08{\tiny$\pm$0.31} & 69.70{\tiny$\pm$0.38} & 70.53{\tiny$\pm$0.18} \\
\midrule
\multirow{6}{*}{PubMed} & LogReg & 73.46{\scriptsize$\pm$0.00} & \textbf{84.01{\tiny$\pm$0.50}} & 81.15{\tiny$\pm$0.21} & 79.61{\tiny$\pm$0.44} & 78.12{\tiny$\pm$0.22} & \underline{82.80{\tiny$\pm$0.18}} \\
 & MLP & 71.07{\scriptsize$\pm$1.07} & \underline{82.03{\tiny$\pm$0.58}} & \textbf{82.44{\tiny$\pm$0.35}} & 80.58{\tiny$\pm$0.33} & 77.33{\tiny$\pm$1.18} & 78.17{\tiny$\pm$1.58} \\
 & HGNN & 83.81{\scriptsize$\pm$0.06} & 82.34{\tiny$\pm$0.29} & \textbf{84.07{\tiny$\pm$0.19}} & \underline{83.65{\tiny$\pm$0.52}} & 81.71{\tiny$\pm$0.07} & 80.99{\tiny$\pm$0.54} \\
 & HNHN & 80.76{\scriptsize$\pm$0.33} & 82.85{\tiny$\pm$0.25} & \underline{83.76{\tiny$\pm$0.50}} & \textbf{83.98{\tiny$\pm$0.45}} & 81.78{\tiny$\pm$0.18} & 81.03{\tiny$\pm$0.22} \\
 & UniGCNII & 84.02{\scriptsize$\pm$0.10} & 84.00{\tiny$\pm$0.12} & \underline{84.20{\tiny$\pm$0.36}} & \textbf{84.26{\tiny$\pm$0.26}} & 81.59{\tiny$\pm$0.33} & 81.64{\tiny$\pm$0.23} \\
 & AllSetTr. & 81.97{\scriptsize$\pm$2.02} & 83.45{\tiny$\pm$0.35} & \underline{84.05{\tiny$\pm$0.24}} & \textbf{84.22{\tiny$\pm$0.15}} & 82.14{\tiny$\pm$0.33} & 81.77{\tiny$\pm$0.52} \\
\midrule
\multirow{6}{*}{Zoo} & LogReg & 74.90{\scriptsize$\pm$0.00} & 75.58{\tiny$\pm$0.63} & \textbf{78.46{\tiny$\pm$0.63}} & \underline{76.82{\tiny$\pm$1.04}} & 34.15{\tiny$\pm$1.09} & 72.56{\tiny$\pm$1.04} \\
 & MLP & 73.39{\scriptsize$\pm$0.47} & 75.45{\tiny$\pm$0.24} & \textbf{77.78{\tiny$\pm$0.71}} & \underline{76.27{\tiny$\pm$1.18}} & 29.36{\tiny$\pm$2.11} & 72.98{\tiny$\pm$0.63} \\
 & HGNN & 65.29{\scriptsize$\pm$2.89} & \textbf{74.48{\tiny$\pm$1.79}} & \underline{66.81{\tiny$\pm$2.11}} & 65.57{\tiny$\pm$3.20} & 46.78{\tiny$\pm$1.66} & 64.33{\tiny$\pm$2.61} \\
 & HNHN & 68.59{\scriptsize$\pm$1.56} & \textbf{70.92{\tiny$\pm$2.74}} & 68.45{\tiny$\pm$2.48} & \underline{69.55{\tiny$\pm$0.41}} & 63.38{\tiny$\pm$2.14} & 65.30{\tiny$\pm$4.77} \\
 & UniGCNII & 71.74{\scriptsize$\pm$1.90} & \textbf{72.15{\tiny$\pm$2.64}} & \underline{69.00{\tiny$\pm$1.26}} & 68.18{\tiny$\pm$2.03} & 66.12{\tiny$\pm$1.45} & 65.43{\tiny$\pm$1.24} \\
 & AllSetTr. & 67.90{\scriptsize$\pm$0.82} & \underline{72.97{\tiny$\pm$1.90}} & 71.74{\tiny$\pm$1.04} & 69.82{\tiny$\pm$1.45} & 70.37{\tiny$\pm$2.97} & \textbf{73.12{\tiny$\pm$3.09}} \\
\botrule
\end{tabular}
\end{table}

\begin{table}[htbp!]
\caption{Macro-F1 (\%) per dataset and classifier (part 1)}\label{tab:app_clf_f11}
\setlength{\tabcolsep}{1.6pt}\scriptsize
\begin{tabular}{@{}llcccccc@{}}
\toprule
Dataset & Classifier & Raw & HyperFuse & TriCL & SE-HSSL & VilLain & HypeBoy \\
\midrule
\multirow{6}{*}{20News} & LogReg & 76.25{\scriptsize$\pm$0.00} & \underline{77.37{\tiny$\pm$0.22}} & 74.84{\tiny$\pm$0.07} & 74.03{\tiny$\pm$0.29} & 76.52{\tiny$\pm$0.05} & \textbf{77.40{\tiny$\pm$0.06}} \\
 & MLP & 76.57{\scriptsize$\pm$0.13} & 76.85{\tiny$\pm$0.48} & \underline{76.91{\tiny$\pm$0.25}} & 76.36{\tiny$\pm$0.17} & 75.73{\tiny$\pm$0.50} & \textbf{77.23{\tiny$\pm$0.05}} \\
 & HGNN & 77.28{\scriptsize$\pm$0.14} & 75.98{\tiny$\pm$0.12} & \textbf{76.81{\tiny$\pm$0.23}} & \underline{76.67{\tiny$\pm$0.17}} & 67.39{\tiny$\pm$0.29} & 73.61{\tiny$\pm$0.83} \\
 & HNHN & 77.18{\scriptsize$\pm$0.05} & \underline{77.26{\tiny$\pm$0.21}} & \textbf{77.47{\tiny$\pm$0.14}} & 77.19{\tiny$\pm$0.12} & 76.39{\tiny$\pm$0.37} & 76.96{\tiny$\pm$0.48} \\
 & UniGCNII & 77.67{\scriptsize$\pm$0.03} & 76.96{\tiny$\pm$0.03} & \textbf{77.53{\tiny$\pm$0.04}} & \underline{77.32{\tiny$\pm$0.24}} & 74.11{\tiny$\pm$0.13} & 75.28{\tiny$\pm$0.26} \\
 & AllSetTr. & 77.24{\scriptsize$\pm$0.39} & 77.44{\tiny$\pm$0.24} & \textbf{78.01{\tiny$\pm$0.24}} & 77.46{\tiny$\pm$0.37} & \underline{77.70{\tiny$\pm$0.14}} & 77.47{\tiny$\pm$0.15} \\
\midrule
\multirow{6}{*}{Citeseer} & LogReg & 49.84{\scriptsize$\pm$0.00} & \textbf{64.57{\tiny$\pm$0.50}} & \underline{63.90{\tiny$\pm$0.28}} & 63.53{\tiny$\pm$1.29} & 49.17{\tiny$\pm$0.05} & 62.94{\tiny$\pm$0.34} \\
 & MLP & 38.73{\scriptsize$\pm$2.15} & \textbf{63.45{\tiny$\pm$0.70}} & 62.14{\tiny$\pm$1.27} & \underline{62.78{\tiny$\pm$0.92}} & 43.67{\tiny$\pm$0.34} & 58.22{\tiny$\pm$5.97} \\
 & HGNN & 60.45{\scriptsize$\pm$0.31} & \underline{64.02{\tiny$\pm$1.22}} & \textbf{64.61{\tiny$\pm$0.65}} & 63.48{\tiny$\pm$1.66} & 48.73{\tiny$\pm$0.73} & 61.89{\tiny$\pm$0.60} \\
 & HNHN & 55.62{\scriptsize$\pm$0.86} & \underline{62.73{\tiny$\pm$0.52}} & \textbf{62.75{\tiny$\pm$1.52}} & 62.48{\tiny$\pm$0.60} & 48.68{\tiny$\pm$1.14} & 61.84{\tiny$\pm$0.49} \\
 & UniGCNII & 63.03{\scriptsize$\pm$0.35} & \textbf{64.41{\tiny$\pm$0.35}} & 64.13{\tiny$\pm$0.65} & \underline{64.20{\tiny$\pm$0.21}} & 48.62{\tiny$\pm$2.04} & 62.89{\tiny$\pm$0.89} \\
 & AllSetTr. & 59.87{\scriptsize$\pm$1.05} & \textbf{64.26{\tiny$\pm$1.36}} & \underline{64.15{\tiny$\pm$0.60}} & 63.70{\tiny$\pm$0.54} & 49.07{\tiny$\pm$0.54} & 61.80{\tiny$\pm$0.60} \\
\midrule
\multirow{6}{*}{Cora-CA} & LogReg & 54.85{\scriptsize$\pm$0.00} & \textbf{79.47{\tiny$\pm$0.64}} & \underline{79.42{\tiny$\pm$0.15}} & 78.58{\tiny$\pm$0.37} & 54.69{\tiny$\pm$0.52} & 76.17{\tiny$\pm$0.35} \\
 & MLP & 40.77{\scriptsize$\pm$0.43} & 75.81{\tiny$\pm$2.43} & \underline{76.97{\tiny$\pm$1.48}} & \textbf{77.45{\tiny$\pm$1.55}} & 46.53{\tiny$\pm$3.19} & 73.64{\tiny$\pm$0.45} \\
 & HGNN & 75.55{\scriptsize$\pm$0.52} & 77.82{\tiny$\pm$1.21} & \textbf{78.76{\tiny$\pm$0.32}} & \underline{78.31{\tiny$\pm$0.51}} & 53.90{\tiny$\pm$1.39} & 73.86{\tiny$\pm$0.89} \\
 & HNHN & 61.82{\scriptsize$\pm$1.77} & 77.48{\tiny$\pm$1.18} & \underline{77.55{\tiny$\pm$0.99}} & \textbf{77.70{\tiny$\pm$0.25}} & 54.48{\tiny$\pm$1.04} & 75.34{\tiny$\pm$0.36} \\
 & UniGCNII & 77.37{\scriptsize$\pm$0.57} & \textbf{79.37{\tiny$\pm$0.62}} & \underline{78.85{\tiny$\pm$0.29}} & 78.20{\tiny$\pm$0.35} & 56.38{\tiny$\pm$0.53} & 75.41{\tiny$\pm$0.52} \\
 & AllSetTr. & 71.44{\scriptsize$\pm$1.56} & \underline{79.38{\tiny$\pm$0.42}} & \textbf{79.42{\tiny$\pm$0.83}} & 79.09{\tiny$\pm$0.71} & 55.53{\tiny$\pm$1.09} & 75.93{\tiny$\pm$0.13} \\
\midrule
\multirow{6}{*}{Cora-CC} & LogReg & 50.53{\scriptsize$\pm$0.00} & 78.35{\tiny$\pm$0.54} & \textbf{79.15{\tiny$\pm$0.56}} & \underline{78.92{\tiny$\pm$0.53}} & 62.98{\tiny$\pm$1.12} & 76.76{\tiny$\pm$0.55} \\
 & MLP & 35.47{\scriptsize$\pm$10.11} & 67.83{\tiny$\pm$2.56} & \underline{75.20{\tiny$\pm$4.43}} & \textbf{77.22{\tiny$\pm$1.30}} & 55.31{\tiny$\pm$4.23} & 70.24{\tiny$\pm$4.12} \\
 & HGNN & 74.42{\scriptsize$\pm$0.79} & 76.54{\tiny$\pm$1.04} & \underline{78.56{\tiny$\pm$1.04}} & \textbf{78.78{\tiny$\pm$0.33}} & 71.55{\tiny$\pm$0.08} & 77.14{\tiny$\pm$0.40} \\
 & HNHN & 67.74{\scriptsize$\pm$0.81} & 77.51{\tiny$\pm$0.08} & \underline{78.45{\tiny$\pm$0.65}} & \textbf{78.48{\tiny$\pm$1.27}} & 71.20{\tiny$\pm$1.54} & 75.97{\tiny$\pm$0.45} \\
 & UniGCNII & 75.42{\scriptsize$\pm$0.18} & 78.70{\tiny$\pm$0.39} & \textbf{78.94{\tiny$\pm$0.27}} & \underline{78.79{\tiny$\pm$0.56}} & 72.05{\tiny$\pm$0.98} & 76.93{\tiny$\pm$0.59} \\
 & AllSetTr. & 70.65{\scriptsize$\pm$2.30} & \underline{78.80{\tiny$\pm$0.15}} & 78.51{\tiny$\pm$0.26} & \textbf{79.07{\tiny$\pm$0.68}} & 72.17{\tiny$\pm$0.68} & 76.46{\tiny$\pm$1.30} \\
\midrule
\multirow{6}{*}{DBLP} & LogReg & 77.37{\scriptsize$\pm$0.00} & 88.86{\tiny$\pm$0.11} & \underline{89.53{\tiny$\pm$0.13}} & OOT & 72.93{\tiny$\pm$0.20} & \textbf{89.89{\tiny$\pm$0.01}} \\
 & MLP & 79.58{\scriptsize$\pm$0.02} & 88.25{\tiny$\pm$0.21} & \textbf{90.74{\tiny$\pm$0.04}} & OOT & 76.62{\tiny$\pm$0.78} & \underline{89.15{\tiny$\pm$0.25}} \\
 & HGNN & 89.37{\scriptsize$\pm$0.06} & 85.93{\tiny$\pm$0.14} & \textbf{90.18{\tiny$\pm$0.02}} & OOT & 75.45{\tiny$\pm$0.17} & \underline{88.36{\tiny$\pm$0.03}} \\
 & HNHN & 83.74{\scriptsize$\pm$0.45} & 88.60{\tiny$\pm$0.21} & \textbf{90.88{\tiny$\pm$0.05}} & OOT & 77.60{\tiny$\pm$0.12} & \underline{89.22{\tiny$\pm$0.11}} \\
 & UniGCNII & 89.79{\scriptsize$\pm$0.05} & 87.57{\tiny$\pm$0.09} & \textbf{90.72{\tiny$\pm$0.09}} & OOT & 78.20{\tiny$\pm$0.14} & \underline{88.93{\tiny$\pm$0.03}} \\
 & AllSetTr. & 88.88{\scriptsize$\pm$0.47} & 89.27{\tiny$\pm$0.08} & \textbf{90.98{\tiny$\pm$0.05}} & OOT & 78.04{\tiny$\pm$0.10} & \underline{89.66{\tiny$\pm$0.06}} \\
\botrule
\end{tabular}
\end{table}

\begin{table}[htbp!]
\caption{Macro-F1 (\%) per dataset and classifier (part 2)}\label{tab:app_clf_f12}
\setlength{\tabcolsep}{1.6pt}\scriptsize
\begin{tabular}{@{}llcccccc@{}}
\toprule
Dataset & Classifier & Raw & HyperFuse & TriCL & SE-HSSL & VilLain & HypeBoy \\
\midrule
\multirow{6}{*}{ModelNet40} & LogReg & 94.19{\scriptsize$\pm$0.00} & \textbf{96.00{\tiny$\pm$0.07}} & \underline{95.97{\tiny$\pm$0.10}} & 95.96{\tiny$\pm$0.03} & 83.27{\tiny$\pm$0.12} & 91.58{\tiny$\pm$0.07} \\
 & MLP & 91.78{\scriptsize$\pm$0.22} & 93.21{\tiny$\pm$0.46} & \textbf{95.53{\tiny$\pm$0.37}} & \underline{95.51{\tiny$\pm$0.60}} & 81.60{\tiny$\pm$0.80} & 87.62{\tiny$\pm$1.35} \\
 & HGNN & 91.88{\scriptsize$\pm$0.09} & \textbf{91.19{\tiny$\pm$0.15}} & \underline{89.86{\tiny$\pm$0.18}} & 89.59{\tiny$\pm$0.25} & 82.90{\tiny$\pm$0.36} & 82.18{\tiny$\pm$0.15} \\
 & HNHN & 94.45{\scriptsize$\pm$0.34} & \textbf{95.10{\tiny$\pm$0.23}} & \underline{92.92{\tiny$\pm$0.29}} & 92.55{\tiny$\pm$0.48} & 82.37{\tiny$\pm$0.21} & 86.93{\tiny$\pm$0.38} \\
 & UniGCNII & 95.31{\scriptsize$\pm$0.04} & \textbf{94.21{\tiny$\pm$0.02}} & \underline{91.63{\tiny$\pm$0.07}} & 91.60{\tiny$\pm$0.05} & 82.89{\tiny$\pm$0.17} & 85.30{\tiny$\pm$0.32} \\
 & AllSetTr. & 95.79{\scriptsize$\pm$0.25} & \textbf{95.85{\tiny$\pm$0.13}} & \underline{93.94{\tiny$\pm$0.27}} & 93.91{\tiny$\pm$0.21} & 82.95{\tiny$\pm$0.18} & 90.04{\tiny$\pm$0.46} \\
\midrule
\multirow{6}{*}{NTU2012} & LogReg & 58.14{\scriptsize$\pm$0.00} & 62.21{\tiny$\pm$0.44} & \textbf{63.56{\tiny$\pm$0.10}} & \underline{63.39{\tiny$\pm$0.71}} & 54.26{\tiny$\pm$0.25} & 56.89{\tiny$\pm$0.10} \\
 & MLP & 55.68{\scriptsize$\pm$0.33} & 59.27{\tiny$\pm$0.64} & \underline{61.41{\tiny$\pm$0.50}} & \textbf{61.41{\tiny$\pm$0.60}} & 54.17{\tiny$\pm$0.42} & 39.03{\tiny$\pm$1.34} \\
 & HGNN & 60.85{\scriptsize$\pm$0.49} & 58.78{\tiny$\pm$0.90} & \underline{60.51{\tiny$\pm$0.81}} & \textbf{61.03{\tiny$\pm$0.77}} & 56.66{\tiny$\pm$0.24} & 48.88{\tiny$\pm$1.34} \\
 & HNHN & 49.03{\scriptsize$\pm$0.65} & 55.59{\tiny$\pm$0.32} & \textbf{58.06{\tiny$\pm$0.17}} & 55.12{\tiny$\pm$1.25} & \underline{55.61{\tiny$\pm$0.45}} & 49.58{\tiny$\pm$1.82} \\
 & UniGCNII & 61.05{\scriptsize$\pm$0.47} & \underline{60.83{\tiny$\pm$0.71}} & \textbf{61.11{\tiny$\pm$0.61}} & 60.44{\tiny$\pm$0.43} & 57.03{\tiny$\pm$0.52} & 55.25{\tiny$\pm$0.36} \\
 & AllSetTr. & 59.60{\scriptsize$\pm$0.14} & 60.52{\tiny$\pm$0.36} & \textbf{62.09{\tiny$\pm$0.66}} & \underline{61.16{\tiny$\pm$0.24}} & 57.24{\tiny$\pm$0.49} & 58.05{\tiny$\pm$0.58} \\
\midrule
\multirow{6}{*}{PubMed} & LogReg & 72.58{\scriptsize$\pm$0.00} & \textbf{82.85{\tiny$\pm$0.58}} & 79.62{\tiny$\pm$0.29} & 77.88{\tiny$\pm$0.41} & 75.23{\tiny$\pm$0.25} & \underline{81.49{\tiny$\pm$0.12}} \\
 & MLP & 70.08{\scriptsize$\pm$0.94} & \underline{80.93{\tiny$\pm$0.59}} & \textbf{81.21{\tiny$\pm$0.59}} & 79.12{\tiny$\pm$0.26} & 73.73{\tiny$\pm$1.12} & 77.11{\tiny$\pm$1.73} \\
 & HGNN & 82.45{\scriptsize$\pm$0.09} & 81.01{\tiny$\pm$0.28} & \textbf{82.49{\tiny$\pm$0.24}} & \underline{81.87{\tiny$\pm$0.72}} & 79.14{\tiny$\pm$0.12} & 79.61{\tiny$\pm$0.52} \\
 & HNHN & 79.75{\scriptsize$\pm$0.30} & 81.70{\tiny$\pm$0.23} & \underline{82.34{\tiny$\pm$0.77}} & \textbf{82.50{\tiny$\pm$0.64}} & 79.37{\tiny$\pm$0.17} & 79.77{\tiny$\pm$0.28} \\
 & UniGCNII & 82.79{\scriptsize$\pm$0.13} & \textbf{82.79{\tiny$\pm$0.17}} & 82.73{\tiny$\pm$0.44} & \underline{82.77{\tiny$\pm$0.27}} & 79.11{\tiny$\pm$0.40} & 80.24{\tiny$\pm$0.29} \\
 & AllSetTr. & 80.95{\scriptsize$\pm$1.94} & 82.05{\tiny$\pm$0.39} & \underline{82.60{\tiny$\pm$0.31}} & \textbf{82.78{\tiny$\pm$0.20}} & 79.73{\tiny$\pm$0.27} & 80.14{\tiny$\pm$0.55} \\
\midrule
\multirow{6}{*}{Zoo} & LogReg & 49.95{\scriptsize$\pm$0.00} & 44.69{\tiny$\pm$0.29} & \textbf{56.20{\tiny$\pm$0.32}} & \underline{51.99{\tiny$\pm$1.11}} & 12.62{\tiny$\pm$0.76} & 48.07{\tiny$\pm$0.81} \\
 & MLP & 50.12{\scriptsize$\pm$0.71} & 46.39{\tiny$\pm$2.69} & \textbf{55.08{\tiny$\pm$1.34}} & \underline{50.33{\tiny$\pm$0.99}} & 13.05{\tiny$\pm$2.68} & 48.08{\tiny$\pm$0.31} \\
 & HGNN & 31.59{\scriptsize$\pm$2.71} & \textbf{41.88{\tiny$\pm$4.16}} & \underline{33.85{\tiny$\pm$4.22}} & 32.05{\tiny$\pm$4.13} & 12.36{\tiny$\pm$0.42} & 28.51{\tiny$\pm$2.29} \\
 & HNHN & 40.01{\scriptsize$\pm$2.75} & \textbf{41.60{\tiny$\pm$4.42}} & 35.33{\tiny$\pm$4.21} & \underline{38.81{\tiny$\pm$2.03}} & 27.86{\tiny$\pm$1.20} & 37.00{\tiny$\pm$8.38} \\
 & UniGCNII & 42.77{\scriptsize$\pm$2.69} & \textbf{40.82{\tiny$\pm$5.83}} & 36.43{\tiny$\pm$2.85} & \underline{37.43{\tiny$\pm$2.54}} & 28.81{\tiny$\pm$0.46} & 34.94{\tiny$\pm$2.19} \\
 & AllSetTr. & 37.22{\scriptsize$\pm$1.80} & \textbf{43.31{\tiny$\pm$4.32}} & \underline{41.60{\tiny$\pm$4.00}} & 40.69{\tiny$\pm$3.29} & 39.35{\tiny$\pm$2.85} & 40.61{\tiny$\pm$3.76} \\
\botrule
\end{tabular}
\end{table}

\clearpage
\section{Ablation of the side input}\label{app:ablation}
\setcounter{table}{0}\setcounter{figure}{0}

We ablate the side input $[\mat{F}\mid\mat{G}]$ of the Stage-C encoder (Sect.~\ref{sec:stageC}); Stages~A and~B and the hyperedge weights are unchanged. Following the standard linear evaluation protocol \citep{velickovic2019dgi,lee2023tricl,li2024sehssl}, we evaluate the variants with logistic regression on the frozen embeddings, which measures how linearly separable the classes are in the embedding itself.

Table~\ref{tab:abl_variants} lists the encoder inputs of each variant.

\begin{table}[htbp!]
\caption{Encoder inputs of the ablation variants. Stages~A and~B and the hyperedge weights are used in all variants}\label{tab:abl_variants}
\begin{tabular}{@{}lccc@{}}
\toprule
Variant & Raw features $\hat{\mat{X}}$ & Feature summary $\mat{F}$ & Incidence summary $\mat{G}$ \\
\midrule
HyperFuse & \checkmark & \checkmark & \checkmark \\
No side input & \checkmark & -- & -- \\
Side F & \checkmark & \checkmark & -- \\
Side G & \checkmark & -- & \checkmark \\
Side only & -- & \checkmark & \checkmark \\
\botrule
\end{tabular}
\end{table}
With the full side input, HyperFuse was the most accurate variant on average (80.37\%, against 80.24\% without the side input, 80.14\% with $\mat{F}$ only and 79.75\% with $\mat{G}$ only; Table~\ref{tab:abl_side_acc}).

\begin{table}[htbp!]
\caption{Ablation of the side input $[\mat{F}\mid\mat{G}]$: accuracy (\%) with logistic regression. HyperFuse: full model (main results); No side input: raw features only; Side F / Side G: raw features and one block; Side only: $[\mat{F}\mid\mat{G}]$ only. Mean $\pm$ standard deviation over three seeds. Best in bold. Average: over the eight datasets other than DBLP}\label{tab:abl_side_acc}
\setlength{\tabcolsep}{3pt}\footnotesize
\begin{tabular}{@{}lccccc@{}}
\toprule
Dataset & HyperFuse & No side input & Side F & Side G & Side only \\
\midrule
20News & \textbf{79.71{\scriptsize$\pm$0.17}} & 79.45{\scriptsize$\pm$0.01} & 79.50{\scriptsize$\pm$0.05} & 79.59{\scriptsize$\pm$0.07} & 79.09{\scriptsize$\pm$0.13} \\
Citeseer & 70.23{\scriptsize$\pm$0.55} & \textbf{70.74{\scriptsize$\pm$0.41}} & 70.42{\scriptsize$\pm$0.60} & 68.25{\scriptsize$\pm$0.19} & 64.16{\scriptsize$\pm$0.19} \\
Cora-CA & \textbf{81.11{\scriptsize$\pm$0.41}} & 81.07{\scriptsize$\pm$0.38} & 80.91{\scriptsize$\pm$0.39} & 79.65{\scriptsize$\pm$0.13} & 73.72{\scriptsize$\pm$0.70} \\
Cora-CC & 80.18{\scriptsize$\pm$0.36} & 80.14{\scriptsize$\pm$0.43} & \textbf{80.81{\scriptsize$\pm$0.28}} & 80.09{\scriptsize$\pm$0.65} & 72.02{\scriptsize$\pm$1.11} \\
ModelNet40 & \textbf{97.18{\scriptsize$\pm$0.03}} & 96.83{\scriptsize$\pm$0.13} & 97.08{\scriptsize$\pm$0.08} & 96.97{\scriptsize$\pm$0.10} & 97.08{\scriptsize$\pm$0.02} \\
NTU2012 & \textbf{74.91{\scriptsize$\pm$0.36}} & 74.41{\scriptsize$\pm$0.14} & 74.37{\scriptsize$\pm$0.45} & 74.15{\scriptsize$\pm$0.17} & 73.67{\scriptsize$\pm$0.19} \\
PubMed & \textbf{84.01{\scriptsize$\pm$0.50}} & 83.56{\scriptsize$\pm$0.39} & 83.66{\scriptsize$\pm$0.07} & 83.74{\scriptsize$\pm$0.14} & 82.58{\scriptsize$\pm$0.26} \\
Zoo & 75.58{\scriptsize$\pm$0.63} & 75.72{\scriptsize$\pm$0.71} & 74.35{\scriptsize$\pm$0.86} & 75.58{\scriptsize$\pm$0.47} & \textbf{76.40{\scriptsize$\pm$0.47}} \\
\midrule
DBLP & 89.39{\scriptsize$\pm$0.06} & 89.36{\scriptsize$\pm$0.24} & \textbf{89.40{\scriptsize$\pm$0.05}} & 89.33{\scriptsize$\pm$0.21} & 85.97{\scriptsize$\pm$0.33} \\
\midrule
Average & \textbf{80.37} & 80.24 & 80.14 & 79.75 & 77.34 \\
\botrule
\end{tabular}
\end{table}

\begin{table}[htbp!]
\caption{Ablation of the side input $[\mat{F}\mid\mat{G}]$: macro-F1 (\%) with logistic regression. HyperFuse: full model (main results); No side input: raw features only; Side F / Side G: raw features and one block; Side only: $[\mat{F}\mid\mat{G}]$ only. Mean $\pm$ standard deviation over three seeds. Best in bold. Average: over the eight datasets other than DBLP}\label{tab:abl_side_f1}
\setlength{\tabcolsep}{3pt}\footnotesize
\begin{tabular}{@{}lccccc@{}}
\toprule
Dataset & HyperFuse & No side input & Side F & Side G & Side only \\
\midrule
20News & \textbf{77.37{\scriptsize$\pm$0.22}} & 77.02{\scriptsize$\pm$0.06} & 77.06{\scriptsize$\pm$0.04} & 77.20{\scriptsize$\pm$0.04} & 76.74{\scriptsize$\pm$0.13} \\
Citeseer & 64.57{\scriptsize$\pm$0.50} & \textbf{65.23{\scriptsize$\pm$0.26}} & 64.65{\scriptsize$\pm$0.97} & 62.34{\scriptsize$\pm$0.20} & 58.37{\scriptsize$\pm$0.18} \\
Cora-CA & \textbf{79.47{\scriptsize$\pm$0.64}} & 79.36{\scriptsize$\pm$0.51} & 79.11{\scriptsize$\pm$0.37} & 77.80{\scriptsize$\pm$0.09} & 71.64{\scriptsize$\pm$0.74} \\
Cora-CC & 78.35{\scriptsize$\pm$0.54} & 78.10{\scriptsize$\pm$0.73} & \textbf{78.85{\scriptsize$\pm$0.40}} & 77.90{\scriptsize$\pm$0.86} & 69.47{\scriptsize$\pm$1.05} \\
ModelNet40 & 96.00{\scriptsize$\pm$0.07} & 95.62{\scriptsize$\pm$0.18} & 95.93{\scriptsize$\pm$0.06} & 95.77{\scriptsize$\pm$0.12} & \textbf{96.18{\scriptsize$\pm$0.08}} \\
NTU2012 & \textbf{62.21{\scriptsize$\pm$0.44}} & 62.17{\scriptsize$\pm$0.35} & 61.96{\scriptsize$\pm$0.64} & 61.39{\scriptsize$\pm$0.22} & 61.28{\scriptsize$\pm$0.12} \\
PubMed & \textbf{82.85{\scriptsize$\pm$0.58}} & 82.50{\scriptsize$\pm$0.43} & 82.61{\scriptsize$\pm$0.13} & 82.55{\scriptsize$\pm$0.22} & 80.74{\scriptsize$\pm$0.32} \\
Zoo & 44.69{\scriptsize$\pm$0.29} & 44.65{\scriptsize$\pm$1.40} & 44.09{\scriptsize$\pm$1.05} & 46.46{\scriptsize$\pm$3.05} & \textbf{51.13{\scriptsize$\pm$2.15}} \\
\midrule
DBLP & \textbf{88.86{\scriptsize$\pm$0.11}} & 88.82{\scriptsize$\pm$0.25} & 88.85{\scriptsize$\pm$0.05} & 88.79{\scriptsize$\pm$0.23} & 85.12{\scriptsize$\pm$0.42} \\
\midrule
Average & \textbf{73.19} & 73.08 & 73.03 & 72.67 & 70.69 \\
\botrule
\end{tabular}
\end{table}

\clearpage
\section{Robustness to weak node features}\label{app:featdeg}
\setcounter{table}{0}\setcounter{figure}{0}

The Stage-C encoder, whose objective follows CCA-SSG \citep{zhang2021ccassg}, is on its own almost as accurate as HyperFuse when the full node features are available. To test what Stages~A and~B add, we compare HyperFuse with this encoder alone (raw features as its only input, uniform hyperedge weights) when the node features are degraded, since Stages~A and~B extract structural information that does not depend on the features. Both models receive the same degraded features: a random 100\%, 50\%, 20\% or 5\% of the feature columns, or 128 columns of Gaussian noise; the hypergraph, labels and splits are unchanged. For noise features, the automatic rule of Sect.~\ref{sec:setup} sets the feature dropout to $p_f=0$, so we also report $p_f=0.3$ (Noise$^{*}$). The embeddings are evaluated with logistic regression.

With the full features, the two models performed alike (80.37\% and 80.23\% average accuracy; Table~\ref{tab:fd_acc}). As features were removed, the difference grew to 5.23 points with 5\% of the columns and to 19.37 points with Noise$^{*}$, concentrated on ModelNet40, NTU2012, PubMed, Zoo and, with noise features, 20News. On the citation hypergraphs, the encoder alone was up to 5.5 points more accurate with 20\% and 5\% of the columns. Macro-F1 follows the same pattern (Table~\ref{tab:fd_f1}).

\begin{table}[htbp!]
\caption{Accuracy (\%) of logistic regression with degraded node features. Columns: fraction of feature columns kept, or Gaussian noise features (Noise: automatic feature dropout, $p_f=0$; Noise$^{*}$: $p_f=0.3$). Encoder: the Stage-C encoder alone, without Stages~A and~B. Mean $\pm$ standard deviation over three seeds. Better of the two models in bold}\label{tab:fd_acc}
\setlength{\tabcolsep}{2pt}\scriptsize
\begin{tabular}{@{}l@{\hspace{6pt}}lcccccc@{}}
\toprule
Dataset & Model & 100\% & 50\% & 20\% & 5\% & Noise & Noise$^{*}$ \\
\midrule
\multirow{2}{*}{20News} & HyperFuse & \textbf{79.71{\tiny$\pm$0.17}} & 77.46{\tiny$\pm$0.66} & 77.82{\tiny$\pm$0.33} & \textbf{78.39{\tiny$\pm$0.01}} & \textbf{29.57{\tiny$\pm$0.84}} & \textbf{75.35{\tiny$\pm$0.78}} \\
 & Encoder & 79.49{\tiny$\pm$0.16} & \textbf{77.56{\tiny$\pm$0.21}} & \textbf{78.02{\tiny$\pm$0.29}} & 78.12{\tiny$\pm$0.37} & 28.20{\tiny$\pm$0.47} & 43.73{\tiny$\pm$0.77} \\
\midrule
\multirow{2}{*}{Citeseer} & HyperFuse & 70.23{\tiny$\pm$0.55} & \textbf{67.00{\tiny$\pm$2.03}} & 59.40{\tiny$\pm$1.61} & 42.14{\tiny$\pm$2.62} & 18.95{\tiny$\pm$0.63} & 23.41{\tiny$\pm$0.53} \\
 & Encoder & \textbf{70.75{\tiny$\pm$0.53}} & 66.76{\tiny$\pm$1.69} & \textbf{60.85{\tiny$\pm$0.94}} & \textbf{47.68{\tiny$\pm$1.48}} & \textbf{19.45{\tiny$\pm$0.73}} & \textbf{24.13{\tiny$\pm$1.08}} \\
\midrule
\multirow{2}{*}{Cora-CA} & HyperFuse & 81.11{\tiny$\pm$0.41} & 73.44{\tiny$\pm$3.58} & 61.39{\tiny$\pm$3.83} & 43.22{\tiny$\pm$0.99} & 17.96{\tiny$\pm$0.55} & \textbf{23.55{\tiny$\pm$0.96}} \\
 & Encoder & \textbf{81.23{\tiny$\pm$0.47}} & \textbf{74.26{\tiny$\pm$2.78}} & \textbf{64.14{\tiny$\pm$2.70}} & \textbf{46.03{\tiny$\pm$1.87}} & \textbf{18.60{\tiny$\pm$0.42}} & 21.78{\tiny$\pm$0.28} \\
\midrule
\multirow{2}{*}{Cora-CC} & HyperFuse & \textbf{80.18{\tiny$\pm$0.36}} & 73.93{\tiny$\pm$0.76} & 62.01{\tiny$\pm$1.57} & 48.18{\tiny$\pm$3.97} & \textbf{19.23{\tiny$\pm$1.51}} & 26.19{\tiny$\pm$0.76} \\
 & Encoder & 80.02{\tiny$\pm$0.44} & \textbf{75.00{\tiny$\pm$0.49}} & \textbf{64.98{\tiny$\pm$2.21}} & \textbf{50.53{\tiny$\pm$2.35}} & 18.24{\tiny$\pm$1.44} & \textbf{26.65{\tiny$\pm$1.15}} \\
\midrule
\multirow{2}{*}{ModelNet40} & HyperFuse & \textbf{97.18{\tiny$\pm$0.03}} & \textbf{96.60{\tiny$\pm$0.09}} & \textbf{93.95{\tiny$\pm$0.62}} & \textbf{87.89{\tiny$\pm$0.61}} & \textbf{6.55{\tiny$\pm$0.31}} & \textbf{53.00{\tiny$\pm$1.34}} \\
 & Encoder & 96.84{\tiny$\pm$0.12} & 96.01{\tiny$\pm$0.11} & 92.24{\tiny$\pm$0.63} & 81.75{\tiny$\pm$1.24} & 3.96{\tiny$\pm$0.03} & 5.66{\tiny$\pm$0.15} \\
\midrule
\multirow{2}{*}{NTU2012} & HyperFuse & \textbf{74.91{\tiny$\pm$0.36}} & \textbf{73.37{\tiny$\pm$0.81}} & \textbf{68.25{\tiny$\pm$0.80}} & \textbf{64.99{\tiny$\pm$0.70}} & \textbf{10.25{\tiny$\pm$0.36}} & \textbf{63.97{\tiny$\pm$0.79}} \\
 & Encoder & 74.40{\tiny$\pm$0.11} & 70.28{\tiny$\pm$0.10} & 59.91{\tiny$\pm$0.73} & 36.03{\tiny$\pm$1.83} & 3.33{\tiny$\pm$0.26} & 30.28{\tiny$\pm$0.49} \\
\midrule
\multirow{2}{*}{PubMed} & HyperFuse & \textbf{84.01{\tiny$\pm$0.50}} & \textbf{82.77{\tiny$\pm$0.38}} & \textbf{81.48{\tiny$\pm$0.90}} & \textbf{80.03{\tiny$\pm$1.49}} & \textbf{46.15{\tiny$\pm$0.43}} & \textbf{77.44{\tiny$\pm$0.65}} \\
 & Encoder & 83.63{\tiny$\pm$0.41} & 82.36{\tiny$\pm$0.32} & 79.92{\tiny$\pm$0.72} & 75.61{\tiny$\pm$3.30} & 36.28{\tiny$\pm$0.55} & 42.99{\tiny$\pm$1.26} \\
\midrule
\multirow{2}{*}{Zoo} & HyperFuse & \textbf{75.58{\tiny$\pm$0.63}} & \textbf{76.54{\tiny$\pm$0.41}} & \textbf{76.96{\tiny$\pm$2.57}} & \textbf{76.82{\tiny$\pm$3.45}} & \textbf{36.63{\tiny$\pm$2.29}} & \textbf{74.07{\tiny$\pm$3.30}} \\
 & Encoder & 75.45{\tiny$\pm$0.24} & 75.17{\tiny$\pm$2.80} & 62.82{\tiny$\pm$5.76} & 64.06{\tiny$\pm$3.82} & 36.35{\tiny$\pm$1.71} & 66.80{\tiny$\pm$3.80} \\
\midrule
\multirow{2}{*}{Average} & HyperFuse & \textbf{80.37} & \textbf{77.64} & \textbf{72.66} & \textbf{65.21} & \textbf{23.16} & \textbf{52.12} \\
 & Encoder & 80.23 & 77.18 & 70.36 & 59.98 & 20.55 & 32.75 \\
\botrule
\end{tabular}
\end{table}

\begin{table}[htbp!]
\caption{Macro-F1 (\%) of logistic regression with degraded node features. Columns: fraction of feature columns kept, or Gaussian noise features (Noise: automatic feature dropout, $p_f=0$; Noise$^{*}$: $p_f=0.3$). Encoder: the Stage-C encoder alone, without Stages~A and~B. Mean $\pm$ standard deviation over three seeds. Better of the two models in bold}\label{tab:fd_f1}
\setlength{\tabcolsep}{2pt}\scriptsize
\begin{tabular}{@{}l@{\hspace{6pt}}lcccccc@{}}
\toprule
Dataset & Model & 100\% & 50\% & 20\% & 5\% & Noise & Noise$^{*}$ \\
\midrule
\multirow{2}{*}{20News} & HyperFuse & \textbf{77.37{\tiny$\pm$0.22}} & 74.92{\tiny$\pm$0.54} & 75.13{\tiny$\pm$0.36} & \textbf{75.79{\tiny$\pm$0.16}} & \textbf{26.30{\tiny$\pm$0.92}} & \textbf{72.16{\tiny$\pm$0.92}} \\
 & Encoder & 77.06{\tiny$\pm$0.16} & \textbf{74.99{\tiny$\pm$0.15}} & \textbf{75.28{\tiny$\pm$0.27}} & 75.23{\tiny$\pm$0.39} & 24.85{\tiny$\pm$0.45} & 39.29{\tiny$\pm$1.31} \\
\midrule
\multirow{2}{*}{Citeseer} & HyperFuse & 64.57{\tiny$\pm$0.50} & \textbf{61.75{\tiny$\pm$2.54}} & 54.28{\tiny$\pm$1.90} & 37.49{\tiny$\pm$2.95} & 17.26{\tiny$\pm$0.79} & 21.44{\tiny$\pm$0.51} \\
 & Encoder & \textbf{65.31{\tiny$\pm$0.24}} & 61.64{\tiny$\pm$1.59} & \textbf{55.59{\tiny$\pm$0.91}} & \textbf{42.22{\tiny$\pm$1.20}} & \textbf{17.46{\tiny$\pm$0.61}} & \textbf{21.84{\tiny$\pm$0.88}} \\
\midrule
\multirow{2}{*}{Cora-CA} & HyperFuse & 79.47{\tiny$\pm$0.64} & 71.10{\tiny$\pm$3.93} & 58.81{\tiny$\pm$2.41} & 38.22{\tiny$\pm$0.72} & 14.62{\tiny$\pm$0.71} & \textbf{19.32{\tiny$\pm$0.92}} \\
 & Encoder & \textbf{79.58{\tiny$\pm$0.67}} & \textbf{72.28{\tiny$\pm$2.86}} & \textbf{61.60{\tiny$\pm$1.64}} & \textbf{41.07{\tiny$\pm$3.17}} & \textbf{15.21{\tiny$\pm$0.55}} & 17.62{\tiny$\pm$0.24} \\
\midrule
\multirow{2}{*}{Cora-CC} & HyperFuse & \textbf{78.35{\tiny$\pm$0.54}} & 71.80{\tiny$\pm$1.62} & 58.23{\tiny$\pm$1.61} & 44.52{\tiny$\pm$2.54} & \textbf{15.72{\tiny$\pm$1.17}} & 22.85{\tiny$\pm$0.49} \\
 & Encoder & 77.92{\tiny$\pm$0.68} & \textbf{73.12{\tiny$\pm$1.10}} & \textbf{61.55{\tiny$\pm$2.69}} & \textbf{46.66{\tiny$\pm$1.30}} & 14.95{\tiny$\pm$1.00} & \textbf{23.40{\tiny$\pm$1.39}} \\
\midrule
\multirow{2}{*}{ModelNet40} & HyperFuse & \textbf{96.00{\tiny$\pm$0.07}} & \textbf{95.28{\tiny$\pm$0.16}} & \textbf{92.17{\tiny$\pm$0.58}} & \textbf{83.60{\tiny$\pm$0.68}} & \textbf{4.71{\tiny$\pm$0.25}} & \textbf{44.26{\tiny$\pm$1.37}} \\
 & Encoder & 95.65{\tiny$\pm$0.18} & 94.56{\tiny$\pm$0.21} & 90.01{\tiny$\pm$0.49} & 75.61{\tiny$\pm$0.69} & 2.76{\tiny$\pm$0.03} & 3.55{\tiny$\pm$0.07} \\
\midrule
\multirow{2}{*}{NTU2012} & HyperFuse & \textbf{62.21{\tiny$\pm$0.44}} & \textbf{60.68{\tiny$\pm$1.25}} & \textbf{54.70{\tiny$\pm$1.47}} & \textbf{50.80{\tiny$\pm$1.21}} & \textbf{5.38{\tiny$\pm$0.24}} & \textbf{48.77{\tiny$\pm$1.14}} \\
 & Encoder & 62.10{\tiny$\pm$0.32} & 57.58{\tiny$\pm$0.31} & 45.83{\tiny$\pm$0.83} & 23.46{\tiny$\pm$0.32} & 1.25{\tiny$\pm$0.10} & 23.23{\tiny$\pm$0.68} \\
\midrule
\multirow{2}{*}{PubMed} & HyperFuse & \textbf{82.85{\tiny$\pm$0.58}} & \textbf{81.72{\tiny$\pm$0.43}} & \textbf{80.12{\tiny$\pm$1.02}} & \textbf{78.04{\tiny$\pm$1.75}} & \textbf{42.92{\tiny$\pm$0.55}} & \textbf{74.65{\tiny$\pm$0.66}} \\
 & Encoder & 82.57{\tiny$\pm$0.43} & 81.34{\tiny$\pm$0.30} & 78.81{\tiny$\pm$0.79} & 73.84{\tiny$\pm$3.27} & 33.70{\tiny$\pm$0.57} & 39.73{\tiny$\pm$1.40} \\
\midrule
\multirow{2}{*}{Zoo} & HyperFuse & \textbf{44.69{\tiny$\pm$0.29}} & 46.95{\tiny$\pm$2.85} & \textbf{51.35{\tiny$\pm$4.63}} & \textbf{51.92{\tiny$\pm$2.83}} & \textbf{10.27{\tiny$\pm$0.33}} & \textbf{44.60{\tiny$\pm$4.51}} \\
 & Encoder & 44.62{\tiny$\pm$1.29} & \textbf{48.07{\tiny$\pm$5.39}} & 34.27{\tiny$\pm$6.82} & 31.96{\tiny$\pm$4.09} & 8.87{\tiny$\pm$0.31} & 35.02{\tiny$\pm$4.64} \\
\midrule
\multirow{2}{*}{Average} & HyperFuse & \textbf{73.19} & \textbf{70.52} & \textbf{65.60} & \textbf{57.55} & \textbf{17.15} & \textbf{43.50} \\
 & Encoder & 73.10 & 70.45 & 62.87 & 51.26 & 14.88 & 25.46 \\
\botrule
\end{tabular}
\end{table}

\clearpage

\end{appendices}
\end{document}